\newcommand{\vect}[1]{\boldsymbol{\mathbf{#1}}}
\newtheorem{corollary}{\textbf{Corollary}}
\newtheorem{theorem}{\textbf{Theorem}}
\newtheorem{proof}{\textit{Proof}}
\newtheorem{lemma}{\textbf{Lemma}}
\newcommand{\PB}[1]{{\color{red} Prasanna: {#1}}}
\title{Multifidelity Reinforcement Learning with \\ Control Variates}
\author{%
  Sami Khairy%\thanks{Use footnote for providing further information about author (webpage, alternative address)---\emph{not} for acknowledgingfunding agencies.} 
  \\
  Argonne National Laboratory\\
  Lemont, IL 60439 \\
  \texttt{ skhairy@anl.gov} \\
  % examples of more authors
  \And
  Prasanna Balaprakash \\
  Argonne National Laboratory\\
  Lemont, IL 60439 \\
  \texttt{pbalapra@anl.gov} \\
  % Coauthor \\
  % Affiliation \\
  % Address \\
  % \texttt{email} \\
  % \AND
  % Coauthor \\
  % Affiliation \\
  % Address \\
  % \texttt{email} \\
  % \And
  % Coauthor \\
  % Affiliation \\
  % Address \\
  % \texttt{email} \\
  % \And
  % Coauthor \\
  % Affiliation \\
  % Address \\
  % \texttt{email} \\
}
\begin{document}

\maketitle

\begin{abstract}

In many computational science and engineering applications, the output of a system of interest corresponding to a given input can be queried at different levels of fidelity with different costs. Typically, low-fidelity data is cheap and abundant, while high-fidelity data is expensive and scarce. In this work we study the reinforcement learning (RL) problem in the presence of multiple environments with different levels of fidelity for a given control task. We focus on improving the RL agent's performance with multifidelity data. Specifically, a multifidelity estimator that exploits the cross-correlations between the low- and high-fidelity returns is proposed to reduce the variance in the estimation of the state-action value function. The proposed estimator, which is based on the method of control variates, is used to design a multifidelity Monte Carlo RL \texttt{(MFMCRL)} algorithm that improves the learning of the agent in the high-fidelity environment. The impacts of variance reduction on policy evaluation and policy improvement are theoretically analyzed by using probability bounds. Our theoretical analysis and numerical experiments demonstrate that for a finite budget of high-fidelity data samples, our proposed \texttt{MFMCRL} agent attains  superior performance compared with that of a standard RL agent that uses only  the high-fidelity environment data for learning the optimal policy.% in terms of both policy evaluation accuracy and policy performance. 
\end{abstract}

\section{Introduction}

Within the computational science and engineering (CSE) community, multifidelity data refers to data that comes from different sources with different levels of fidelity. The criteria by which data is considered to be low fidelity or high fidelity vary across different applications, but  usually  low-fidelity data is much cheaper to generate than high-fidelity data under some cost metric. In robotics for instance, data coming from a robot operating in the real world constitutes high-fidelity data, while simulated data of the robot based on first principles is considered to be low-fidelity data. Different simulators of the robot can also be designed by increasing the modeling complexity. A simulator that takes into account aerodynamic drag is, for instance, of higher fidelity than one that is based only on the simple laws of motion. As another example, a neural classifier in deep learning can be trained on the \textit{full} training data for a \textit{large} number of training epochs, or on a \textit{subset} of the training data for \textit{few} epochs. Evaluating the trained model on a held-out validation data set in the former case  yields a higher-fidelity estimate of the classifiers' performance compared with that in the latter case. In general, %\PB{not clear what both refers to}, 
low-fidelity data serves as an approximation to its high-fidelity counterpart and can be generated cheaply and abundantly \cite{meng2020composite}. Many outer-loop applications that require querying the system at many different inputs, including black-box optimization \cite{li2020multi}, inference \cite{perdikaris2017nonlinear}, and uncertainty propagation \cite{koutsourelakis2009accurate,peherstorfer2016optimal}, can exploit the cross-correlations between low- and  high-fidelity data to solve new problems that would otherwise be prohibitively costly to solve using high-fidelity data alone \cite{peherstorfer2018survey,perdikaris2017nonlinear}. 

%In CSE, models are often designed using first principles to describe the input-output relationship of a system of interest in the real world. Evaluating a model typically amounts to running (stochastic) simulations and/or numerical computations that map the input of a system to an approximation of the real world output. In this case, real world data constitutes the highest fidelity data, while model outputs are low fidelity data for they are only approximations of real outputs. 
%addresses the sample efficiency challenge in data-driven scientific methods intelligently. In multifidelity modelling, multiple simulators with different levels of fidelity are designed in a way that trades-off computational cost and accuracy \cite{meng2020composite}. Typically, high-fidelity simulators can be queried to produce high-fidelity data which represents a real-world physical phenomenon with the highest accuracy, however, data generation is computationally very expensive. On the other hand, low-fidelity data can be cheaply generated using low-fidelity simulators, yet it is potentially inaccurate and noisy. 

Motivated by the advent of multifidelity data sources within CSE, in this work we study the reinforcement learning (RL) problem in the presence of multiple environments %\PB{I am not sure about the multiple envs? are you trying to be generic?} 
with different levels of fidelity for a given control task. RL is a popular machine learning paradigm for intelligent sequential decision-making under uncertainty, enabling data-driven control of complex systems with scales ranging from quantum \cite{khairy2020learning} to cosmological \cite{moster2021galaxynet}. State-of-the-art model-free RL algorithms have indeed demonstrated sheer success for learning complex policies from raw data in single-fidelity environments \cite{mnih2015human, lillicrap2015continuous,schulman2015trust,schulman2017proximal,pmlr-v80-haarnoja18b}. This success, however, comes at the cost of requiring a large number of data samples to solve a control task \textit{satisfactorily}.\footnote{Poor sample complexity of model-free RL algorithms has long motivated developments in model-based RL, where a predictive model of the environment is learned alongside the policy \cite{janner2019trust, schrittwieser2020mastering}. Our work is focused on model-free RL.} In the presence of multiple environments with different levels of fidelity, new ways arise that could help the agent learn better policies. One way that has been well studied in the context of RL is \textit{transfer learning (TL)}. In TL \cite{taylor2009transfer,chebotar2019closing,zhu2020transfer}, the agent first uses the low-fidelity environment to learn a policy that is then transferred (directly or indirectly through the transfer of the state-action value function) to the high-fidelity environment as a heuristic to bootstrap learning. Essentially, TL attempts to leverage multifidelity environments to deal with
%GAIL - strike on  seems  somewhat odd here -- unless you  really mean  to come  upon  the  dilemma  suddenly
the exploration-exploitation dilemma that is present within RL, and it works under the assumption that the maximum deviation between the optimal low-fidelity state-action value function and the optimal high-fidelity state-action value function is bounded with a threshold that is used by TL for bootsrapping the high-fidelity value function \cite{cutler2015real}. In our work we explore an uncharted territory and focus on \textit{multifidelity} estimation in RL and its role in improving the learning of the agent. We demonstrate that as long as the low- and high-fidelity state-action value functions for any policy are correlated, significant performance improvements can be reaped by leveraging these cross-correlations without extra effort in managing the exploration-exploitation process.

The main contributions of our work are summarized as follows. First, we study a generic multifidelity setup in which the RL agent can execute a policy in two environments, a low-fidelity environment and a high-fidelity environment. To leverage the cross-correlations between the low- and high-fidelity returns, we propose an unbiased reduced-variance multifidelity estimator for the state-action value function based on the framework of control variates. Second, a multifidelity Monte Carlo (MC) RL algorithm, named \texttt{MFMCRL}, is proposed to improve the learning of the RL agent in the high-fidelity environment. For any finite budget of high-fidelity environment interactions, \texttt{MFMCRL} leverages low-fidelity data to learn better policies than a standard RL agent  that  uses only  the high-fidelity data. Third, we theoretically analyze the impacts of variance reduction in the estimation of the state-action value function on policy evaluation and policy improvement using probability bounds. Fourth, performance gains of the proposed \texttt{MFMCRL} algorithm are empirically assessed through numerical experiments in synthetic multifidelity environments, as well as a neural architecture search (NAS) use case. 

\section{Preliminaries and related work}

\subsection{Reinforcement learning}
We consider episodic RL problems where the environment $\Sigma$ is specified by an infinite-horizon Markov decision process (MDP) with discounted returns \cite{bellman1957markovian}. Specifically, an infinite-horizon MDP is defined as a tuple $\mathcal{M}=(\mathcal{S},\mathcal{A},\mathcal{P}, \vect{\beta},$ $\mathcal{R},\gamma)$,  where $\mathcal{S}$ and $\mathcal{A}$ are finite sets of states and actions, respectively; $\mathcal{P}:\mathcal{S}\times \mathcal{A} \times \mathcal{S} \rightarrow [0,1]$ is  the environment dynamics; and $\vect{\beta}:\mathcal{S} \rightarrow [0,1]$ is the initial distribution over the states, that is, $\beta(s) = \text{Pr}(s_0=s), \forall s \in \mathcal{S}$. The reward function $\mathcal{R}$ is  bounded and defined as $\mathcal{R}: \mathcal{S} \times \mathcal{A} \rightarrow [R_\text{min}, R_\text{max}]$, where $R_\text{min}$ and  $R_\text{max}$ are real numbers. $\gamma$ is a discount factor to bound the cumulative rewards and trade off how far- or short-sighted the agent is in its decision making. The environment dynamics, $\mathcal{P}(s^\prime | s,a), \forall s,a,s^\prime \in \mathcal{S}\times \mathcal{A} \times \mathcal{S}$, encode the stationary transition probability from a state $s $ to a state $s^\prime$ given that action $a$ is chosen \cite{bertsekas2000dynamic,kallenberg2011markov}. In the episodic setting, there exists at least one terminal state $s_T$ such that $\mathcal{P}(s^\prime|s_T, a)=0, \forall a, s^\prime \neq s_T$ and $\mathcal{P}(s_T|s_T, a)=1, \forall a$, i.e. $s_T$ is an absorbing state. Furthermore, $\beta(s_T)=0$ and $\mathcal{R}(s_T,a)=0, \forall a$. When the RL agent transitions into a terminal state, all subsequent rewards are zero, and simulation is restarted from another state $s \sim \beta$.

The agent's decision-making process is characterized by $\pi(a|s)$, which is a Markov stationary policy that defines a distribution over the actions $a \in \mathcal{A}$ given a state $s \in \mathcal{S}$. In the RL problem, $\mathcal{P}$ and $\mathcal{R}$ are not known to the agent, yet the agent can interact with the environment sequentially at discrete time steps, $t=0,1,2,\cdots, T$,  by exchanging actions and rewards. Notice that $T$ is a random variable and denotes the time step at which the agent transitions into a terminal state. At each time step $t$, the agent observes the environment's state $s_t=s \in \mathcal{S}$, takes  action $a_t=a \sim \pi(a|s) \in \mathcal{A}$,  and receives a reward $r_{t+1} = \mathcal{R}(s,a)$. The environment's state then evolves to a new state $s_{t+1}=s^\prime \sim \mathcal{P}(s^\prime|s,a)$. The state-value function of a state $s$ under a policy $\pi$ is defined as the expected long-term discounted returns starting in state $s$ and following policy $\pi$ thereafter, $V_\pi(s) = \mathbb{E}_{a_t\sim \pi, s_t \sim \mathcal{P}}\bigg[\sum_{t=0}^\infty \gamma^t \mathcal{R}(s_t,a_t)|s_0 = s \bigg]$. %=\mathbb{E}_{a\sim \pi, s^\prime \sim \mathcal{P}}\big[\mathcal{R}(s,a) + \gamma V_\pi(s^\prime) \big]$. 
In addition, the state-action value function of a state $s$ and action $a$ under a policy $\pi$ is defined as $Q_\pi(s,a) = \mathbb{E}_{a_t\sim \pi, s_t \sim \mathcal{P}}\bigg[\sum_{t=0}^\infty \gamma^t \mathcal{R}(s_t,a_t)|s_0 = s, a_0 = a \bigg]$. %=\mathcal{R}(s,a) +\gamma \mathbb{E}_{a^\prime \sim \pi, s^\prime \sim \mathcal{P}}\big[  Q_\pi(s^\prime, a^\prime) \big]$. 
Notice that $V_\pi(s) = \mathbb{E}_{a\sim \pi}[Q_\pi(s,a)]$. %=\sum_{a\in\mathcal{A}}\pi(a|s)Q_\pi(s,a)$. 
The solution of the RL problem is a policy $\pi^*$ that maximizes the discounted returns from the initial state distribution $
\pi^* = \underset{\pi}{\text{argmax}}~~ \mathbb{E}_{s \sim \beta}[V_{\pi}(s)]$. %= \sum_{s\in \mathcal{S}} V_\pi(s)\beta(s)$.
It is well known that there exists at least one optimal policy $\pi^*$ such that $V_{\pi^*}(s) = \underset{\pi}{\text{max}}~ V_{\pi}(s), \forall s \in \mathcal{S}$ and $Q_{\pi^*}(s,a) = \underset{\pi}{\text{max}} ~Q_{\pi}(s,a), \forall s,a \in \mathcal{S \times A}$ \cite{altman1999constrained}. Furthermore, a deterministic policy that selects the greedy action with respect to $Q_{\pi^*}(s,a), \forall s \in \mathcal{S}$, is an optimal policy. %\PB{are you referring these eqns again; if not make them inline to gain space.}

\subsection{Control variates}
\label{Sec:CV}
The method of control variates is a variance reduction technique that leverages the correlation between random variables (r.vs.) to reduce the variance of an estimator \cite{lemieux2014control}. Let ${W}_1, {W}_2, \cdots, {W}_n$ be $n$ independent and identically distributed (i.i.d.) r.vs. such that $\mathbb{E}[W_i]=\mu_{_W}$, and $\mathbb{E}[(W_i-\mu_{_W})^2]=\sigma^2_{_W}, \forall i \in [n]$. In addition, let ${Z}_1, {Z}_2, \cdots, {Z}_n$ be $n$ i.i.d. r.vs. such that $\mathbb{E}[Z_i]=\mu_{_Z}$, and $\mathbb{E}[(Z_i-\mu_{_Z})^2]=\sigma^2_{_Z}, \forall i \in [n]$. Suppose that $W_i, Z_i$ are correlated with a correlation coefficient $\rho_{_{W,Z}} = \frac{\text{Cov}[Z_i,W_i]}{\sqrt{\sigma^2_{_Z}}\sqrt{ \sigma^2_{_W}}}, \forall i \in [n]$, where $\text{Cov}[Z_i,W_i]=\mathbb{E}[Z_iW_i]-\mathbb{E}[Z_i]\mathbb{E}[W_i]$ is the covariance between $Z_i$ and $W_i$. Furthermore, suppose that $W_i, Z_j$ are independent and thus uncorrelated $\forall i \neq j$. Using the Cauchy–-Schwartz inequality, one can  show  that $|\rho_{_{W,Z}}| \leq 1$. 

To estimate $\mu_{_W}$, we first consider the sample mean estimator, $\hat{\theta}_1 = \frac{1}{n} \sum_{i=1}^n W_i$. $\hat{\theta}_1$ is an unbiased estimator of $\mu_{_W}$, in other words, $\mathbb{E}[\hat{\theta}_1] = \frac{1}{n} \sum_{i=1}^n \mathbb{E}[W_i] = \mu_{_W}$, and has a variance $\text{Var}[\hat{\theta}_1] = \frac{\sigma^2_{_W}}{n}$. Next, we consider the control-variate-based estimator, 
\begin{equation} \label{eq:cv}
\hat{\theta}_2 = \frac{1}{n} \sum_{i=1}^n W_i + \alpha(Z_i - \mu_{_Z}).
\end{equation}
$\hat{\theta}_2$ is also an unbiased estimator of $\mu_{_W}$, i.e., $\mathbb{E}[\hat{\theta}_2] = \mu_{_W}$, yet it has a variance $\text{Var}[\hat{\theta}_2] = \frac{1}{n} \text{Var}[W_i + \alpha(Z_i -\mu_{_Z})]=\frac{1}{n} \big( \text{Var}[W_i] + \alpha^2 \text{Var}[Z_i] + 2\alpha \text{Cov}[Z_i,W_i] \big)$.  The variance of $\hat{\theta}_2$ can be controlled and minimized by setting $\alpha$ to the minima of $\text{Var}[W_i] + \alpha^2 \text{Var}[Z_i] + 2\alpha \text{Cov}[Z_i,W_i]$, which is attained at $\alpha^* = -\frac{\text{Cov}[Z_i,W_i]}{\sigma^2_{_Z}}= -\rho_{_{Z,W}}\frac{\sigma_{_W}}{\sigma_{_Z}}$. Hence,
by introducing $\alpha(Z_i - \mu_{_Z})$ as a control variate, the variance of $\hat{\theta}_2$ is reduced, \begin{equation}
    \text{Var}[\hat{\theta}_2] = (1-\rho^2_{_{Z,W}}) \text{Var}[\hat{\theta}_1].
\end{equation}
Because $\hat{\theta}_2$ is an unbiased estimator, $\hat{\theta}_2$ has a lower mean squared error (MSE) by the bias-variance decomposition theorem of the MSE. Applications of the method of control variates extend beyond variance reduction. For example, the concept of control variates is used in \cite{peherstorfer2016optimal} to design a fusion framework to combine an arbitrary number of surrogate models optimally.

\subsection{Related work}

%Computational models that describe the input-output relationship of a physical phenomenon  are designed in CSE to support system design and system control in scientific applications. Evaluating a model involves performing numerical simulations, such as solving partial differential equations (PDE), to compute an output of interest based on some inputs \cite{peherstorfer2018survey}. In many models, the output or the objective function, can be queried at different fidelities with different computational footprints. For example, a numerical PDE solver can be executed with a coarse mesh to speed up computations and produce a rough result, or with a dense mesh to improve accuracy at the cost of a higher computational expense. 

In \cite{abbeel2006using},  a policy search algorithm is proposed that leverages a crude approximate model $\hat{\mathcal{P}}$ of the true MDP to quickly learn to perform well on real systems. The proposed algorithm, however, is limited to the case where ${\mathcal{P}}$ is deterministic, and it assumes that model derivatives are good approximations of the true derivatives such that policy gradients can be computed by using the approximate model. %Theoretical properties of RL with an imperfect model $\hat{\mathcal{P}}$ is studied in \cite{jiang2018pac}. %It has been found that  should be optimistic or otherwise it is useless. 
In transfer learning (TL)  \cite{taylor2007transfer,mann2013directed}, value, model, or policy parameters are transferred in one direction as a heuristic initialization to bootstrap learning in the high-fidelity environment, with no option for backtracking. The option for the agent to backtrack and to choose which environment to use is studied in the multifidelity RL (MFRL) work of \cite{cutler2015real}. That algorithm is extended  in \cite{suryan2020multifidelity}  by integrating function approximation using Gaussian processes \cite{williams2006gaussian}. As in TL, both \cite{cutler2015real} and \cite{suryan2020multifidelity} use the value function from a lower-fidelity environment as a heuristic to bootstrap learning and \textit{guide exploration} in the high-fidelity environment. From an optimization viewpoint, 
%using the optimal low-fidelity value state-value function as an initialization to bootstrap learning in the high-fidelity environment 
this approach is reasonable only if the lower-fidelity  value  function lies in the vicinity of the optimal high-fidelity value function, a  situation that cannot be guaranteed or known a priori in general.
\begin{comment}
In fact, in lemma \textcolor{red}{X} in the appendix, we derive the optimality gap between the performance of the optimal policy in the low-fidelity environment, $\pi_{\text{lo}}^*$, and the performance of the optimal policy in the high-fidelity environment, $\pi_{\text{hi}}^*$, if we transfer $\pi_{\text{lo}}^*$ as it is to the high-fidelity environment, for a hypothetical simplistic scenario \PB{hard to understand this sentence}.
% where the low- and high- fidelity environments are identical except in the reward function.
It is shown that by transferring $\pi_{\text{lo}}^*$ to the high-fidelity environment, we might be $\frac{\delta (2-\gamma)}{(1-\gamma)}$ away from the performance of $\pi_{\text{hi}}^*$ in the worst case, where $\delta$ bounds the maximum deviation between the low- and high- fidelity reward function. Hence a small $\delta$ deviation in the low- and high- fidelity reward functions, is amplified by a factor $\frac{(2-\gamma)}{(1-\gamma)}$, demonstrating the huge impact of the discount factor in this case. 
\end{comment}
Hence, in \cite{cutler2015real,suryan2020multifidelity}, it is assumed that the optimal state-action value function in the low- and high-fidelity environments differ by no more than a small parameter $\beta$ at every state-action pair, and they require the knowledge of $\beta$ a priori to manage exploration-exploitation across multifidelity environments. By contrast, we  require only  that the low- and high-fidelity returns are correlated in our work, and the correlation need not be known a priori. The cross-correlation between the low- and high-fidelity returns is used for reducing the variance in the \textit{estimation} of the high-fidelity state-action value function, and hence our approach is complementary to existing TL techniques that use multifidelity environments for guided exploration \cite{cutler2015real,suryan2020multifidelity}. We show that as long as the low- and high-fidelity state-action value function of a policy are correlated, the agent can benefit from the cheap and abundantly available low-fidelity data to improve its performance, without altering the exploration process.

%We study the general case where the rewards in low- and high- fidelity simulators are related through a noisy (non)-linear transformation, and so the rewards can be fundamentally different in terms of scale, range, etc.

\section{Multifidelity estimation in RL}
\subsection{Problem setup}
We consider a multifidelity setup in which the RL agent has access to two environments, $\Sigma^{\text{lo}}$ and $\Sigma^{\text{hi}}$, modeled by the two MDPs $\mathcal{M}^{\text{lo}} = (\mathcal{S}^{\text{lo}},\mathcal{A},\mathcal{P}^{\text{lo}}, \vect{\beta}^{\text{lo}},\mathcal{R}^{\text{lo}},\gamma)$, and $\mathcal{M}^{\text{hi}} = (\mathcal{S}^{\text{hi}},\mathcal{A},\mathcal{P}^{\text{hi}}, \vect{\beta}^{\text{hi}},\mathcal{R}^{\text{hi}},\gamma)$, respectively, as shown in Figure \ref{fig:mfrl}.
$\Sigma^{\text{lo}}$ is a low-fidelity environment in which the low-fidelity reward function  $\mathcal{R}^{\text{lo}}: \mathcal{S \times A} \rightarrow [R_\text{min}^\text{lo},R_\text{max}^\text{lo}]$ and the low-fidelity dynamics $\mathcal{P}^{\text{lo}}$ are cheap\footnote{Sampling cost is application dependent. It is up to the practitioner to assign cost and determine low- and high-fidelity sampling budgets.} to evaluate/simulate, yet they are potentially inaccurate. On the other hand, $\Sigma^{\text{hi}}$ is a high-fidelity environment in which the high-fidelity reward function $\mathcal{R}^{\text{hi}}: \mathcal{S \times A} \rightarrow [R_\text{min}^\text{hi},R_\text{max}^\text{hi}]$ and the high-fidelity dynamics $\mathcal{P}^{\text{hi}}$ describe the real-world system with the highest accuracy, yet they are expensive to evaluate/simulate \cite{fernandez2016review}. We stress that $(\mathcal{P}^{\text{hi}}, \vect{\beta}^{\text{hi}},\mathcal{R}^{\text{hi}})$ and $(\mathcal{P}^{\text{lo}}, \vect{\beta}^{\text{lo}},\mathcal{R}^{\text{lo}})$ are \textbf{unknown} to the agent, and interaction with the two environments is only through the exchange of states, actions, next states and rewards, which is the typical case in RL.

The action space $\mathcal{A}$ is the same in both environments, yet the state space may differ. It is assumed that the low-fidelity state space is a subset of the high-fidelity state space, $\mathcal{S}^{\text{lo}} \subseteq \mathcal{S}^{\text{hi}}$, in  other words, the states  available  in the low-fidelity environment are a subset of those available at the high-fidelity environment, and it is assumed that there exists a known mapping\footnote{$\mathcal{T}$ is problem-specific. For instance, if $\mathcal{S}^{\text{hi}}$ represents a fine grid and $\mathcal{S}^{\text{lo}}$ represents a coarse grid, then $\mathcal{T}$ will map $s^{\text{hi}}$ to the closest $s^{\text{lo}}$ based on a chosen distance metric.} $\mathcal{T}: \mathcal{S}^{\text{hi}} \rightarrow \mathcal{S}^{\text{lo}}$ as in previous works \cite{taylor2007transfer,cutler2015real}. High-fidelity environments usually capture more state information than do low- fidelity environments so $\mathcal{T}$ can be a many-to-one map. 
%It is further assumed that the maps governing $\mathcal{P}^{\text{hi}} \rightarrow \mathcal{P}^{\text{lo}}$ and $\vect{\beta}^{\text{hi}} \rightarrow \vect{\beta}^{\text{lo}}$, are consistent with $\mathcal{T}$, and that $\mathcal{R}^{\text{lo}}(s^{\text{lo}},a)$ and $\mathcal{R}^{\text{hi}}(\mathcal{T}(s^{\text{hi}}),a)$ are correlated.
\begin{comment}
The low-fidelity reward function can be a noisy linear or nonlinear transformation of the high-fidelity reward function, 
\begin{equation}
    \mathcal{R}^{\text{lo}}(s,a) = \zeta_1(s,a) \mathcal{T}(\mathcal{R}^{\text{hi}}(s,a)) + \zeta_2(s,a),
\end{equation}
$\forall s,a \in \mathcal{S \times A}$, where $\mathcal{T}$ is a linear or a non-linear deterministic map, and $\zeta_1(s,a),\zeta_2(s,a)$ are two r.vs. to model noise. 
\end{comment}
%It is further assumed that the maps governing $\mathcal{P}^{\text{hi}} \rightarrow \mathcal{P}^{\text{lo}}$ and $\vect{\beta}^{\text{hi}} \rightarrow \vect{\beta}^{\text{lo}}$, are consistent with $\mathcal{T}$, and that $\mathcal{R}^{\text{lo}}(s^{\text{lo}},a)$ and $\mathcal{R}^{\text{hi}}(\mathcal{T}(s^{\text{hi}}),a)$ are correlated.
Access to the high-fidelity simulator $\Sigma^{\text{hi}}$ is restricted to full episodes $\tau^\text{hi}=(s_0^{\text{hi}},a_0,r^\text{hi}_1,s_1^{\text{hi}},a_1,r^\text{hi}_2,s_2^{\text{hi}},\cdots, s_T^{\text{hi}})$. On the other hand, $\Sigma^{\text{lo}}$ is generative, and simulation can be started by the agent at any state-action pair \cite{kakade2002approximately,kearns2002sparse}. % \PB{is this a strong and limiting assumption; why you need this?}. 
Using $\mathcal{T}$ and $\Sigma^\text{lo}$, the agent can map a $\tau^\text{hi}$ to $\tau^\text{lo}=(\mathcal{T}(s_0^{\text{hi}}),a_0,r^\text{lo}_1,\mathcal{T}(s_1^{\text{hi}}),a_1,r^\text{lo}_2,\mathcal{T}(s_2^{\text{hi}}),\cdots, \mathcal{T}(s_T^{\text{hi}}))$, and it is assumed that $\text{Pr}(\tau^{\text{lo}})>0$ under $\mathcal{P}^{\text{lo}}$ and $\vect{\beta}^{\text{lo}}$. It is also assumed that  $\mathcal{R}^{\text{lo}}(\mathcal{T}(s^{\text{hi}}),a)$ and $\mathcal{R}^{\text{hi}}(s^{\text{hi}},a)$ are correlated.% \PB{we have to discuss this}. 

%Based on this setup, a low-fidelity trajectory $\tau^\text{lo}$ and a high-fidelity trajectory $\tau^\text{hi}$ 

%t is worth stressing that the agent has no knowledge of how  are correlated, which can be either a positive correlation or a negative correlation. 

Based on this setup,  a correlation exits between the low- and high- fidelity trajectories that can be beneficial for policy learning. In this work we study how to leverage the cheaply accessible low-fidelity trajectories from $\Sigma^{\text{lo}}$, to learn an optimal $\pi^*$ that maximizes $\mathbb{E}_{s\sim\beta^{\text{hi}}}\bigg[ \mathbb{E}_{a_t\sim \pi, s_t \sim \mathcal{P}^{\text{hi}}}\bigg[\sum_{t=0}^\infty \gamma^t \mathcal{R}^\text{hi}(s_t^{\text{hi}},a_t)|s_0^{\text{hi}} = s \bigg] \bigg]$; in  other  words, to learn $\pi^*$ that is optimal with respect to the high-fidelity environment $\Sigma^{\text{hi}}$.

%The state and action spaces are the same in both $\Sigma^{\text{lo}}$ and $\Sigma^{\text{hi}}$ as in prior works \cite{cutler2015real,suryan2020multifidelity}. 

%This setup is motivated by sequential decision problems in scientific applications where the objective can be evaluated at different levels of fidelity \cite{meng2020composite,perdikaris2017nonlinear,peherstorfer2018survey,li2020multi}. 

\begin{wrapfigure}[13]{r}{0.5\textwidth}
\vspace{-6mm}
\includegraphics[width=0.5\textwidth]{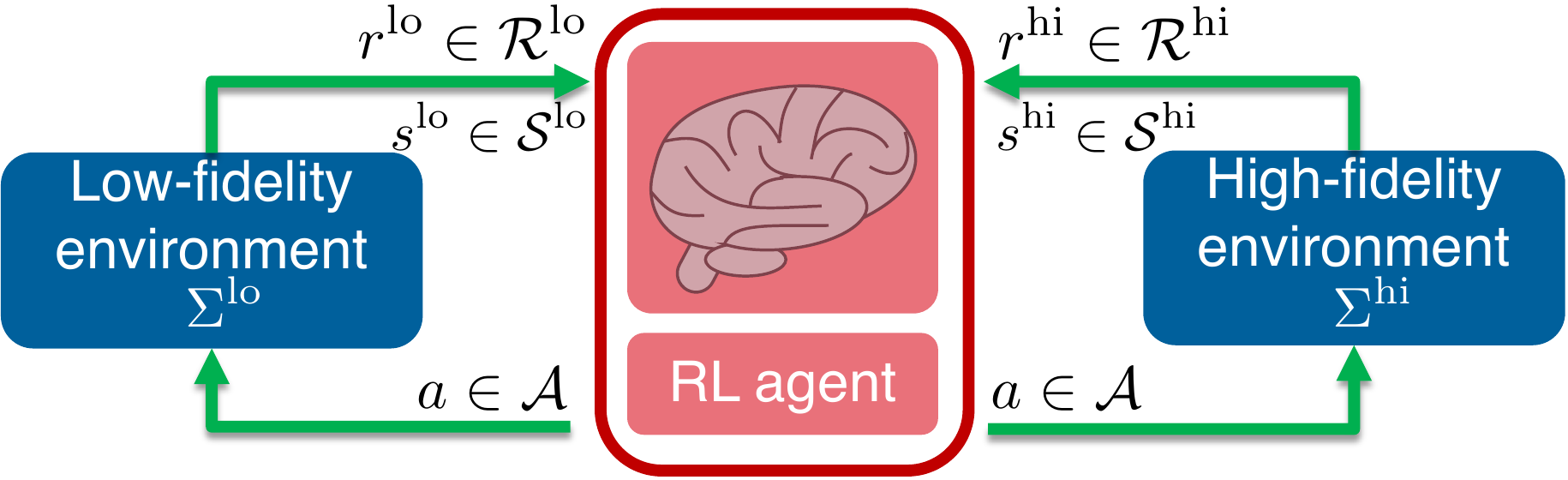}
\caption{RL with low- and high-fidelity environments. %The agent can choose which simulator to interact with.
    $\Sigma^{\text{lo}}$ is cheap to evaluate but is potentially inaccurate. $\Sigma^{\text{hi}}$ represents the real world with the highest accuracy, yet it is expensive to evaluate. The RL agent leverages the correlations between the low- and high-fidelity data to learn $\pi^*_{\text{hi}}$.}
\label{fig:mfrl}
\end{wrapfigure}

\subsection{Multifidelity Monte Carlo RL}
The Monte Carlo method to solve the RL problem is based on the idea of averaging sample returns. In the MC method, experience is divided into episodes. % which terminate when the agent transitions into an absorbing state.
At the end of an episode, state-action values are estimated, and the policy is updated. For ease of exposition, we consider a specific state-action pair $(s^{\text{hi}},a)$ in what follows and suppress the dependence on $(s^{\text{hi}},a)$ from the notation to avoid clutter. Consider a sample trajectory $\tau^{\text{hi}}$
that results from the agent's interaction with the high-fidelity environment starting at $(s_0^{\text{hi}}=s^{\text{hi}},a_0=a)$ and following $\pi$, that is,  $\tau^{\text{hi}}: s^{\text{hi}}_{0},a_{0},r^{\text{hi}}_{1},s^{\text{hi}}_{1},a_{1},r^{\text{hi}}_{2},\cdots, s_T^{\text{hi}}$. Note that $r^{\text{hi}}_{t+1}=\mathcal{R}^\text{hi}(s_t^{\text{hi}},a_t)$. Let $\mathcal{G}^{\text{hi}}$
denote the corresponding long-term discounted return, $\mathcal{G}^{\text{hi}} = \sum_{t=0}^\infty \gamma^t r^{\text{hi}}_{t+1}$. The high-fidelity state-action value of the pair $(s,a)$ when the agent follows $\pi$ is 
\begin{equation}
    Q^{\text{hi}}_\pi(s^{\text{hi}},a) = \mathbb{E}_{\tau^{\text{hi}}}\big[\mathcal{G}^{\text{hi}} | s_0^{\text{hi}} = s^{\text{hi}},a_0=a \big]. 
\end{equation}
Notice that $Q^{\text{hi}}_{\pi}(s^{\text{hi}},a)$ is the expectation of an r.v. $\mathcal{G}^\text{hi}$ with respect to the random trajectory $\tau^\text{hi}$. $\mathcal{G}^\text{hi}$ is a bounded r.v. with support  on the interval $[\frac{R_{\text{min}}^{\text{hi}}}{1-\gamma},\frac{R_{\text{max}}^{\text{hi}}}{1-\gamma}]$ and has a finite variance given by 
\begin{equation}
\begin{aligned}
    &\sigma_{\text{hi}}^2{(s^{\text{hi}},a)} = \mathbb{E}_{\tau^{\text{hi}}}\Big[\big(\mathcal{G}^{\text{hi}}-Q^{\text{hi}}_\pi(s^{\text{hi}},a)\big)^2 | s_0 = s^{\text{hi}},a_0=a \Big]. 
\end{aligned}
\end{equation}

By interacting with the environment, the agent can  sample only a finite number of trajectories, $n$. Let $\tau_{1}^{\text{hi}},\tau_{2}^{\text{hi}}, \cdots, \tau_{n}^{\text{hi}}$ be the $n$ sampled trajectories that starts at the pair $(s^{\text{hi}},a)$. Furthermore, let $\mathcal{G}^{\text{hi}}_{1}, \mathcal{G}^{\text{hi}}_{2}, \cdots, \mathcal{G}^{\text{hi}}_{n}$ be i.i.d. r.vs. that correspond to the long-term discounted returns of the sampled trajectories, $\tau_{1}^{\text{hi}},\tau_{2}^{\text{hi}}, \cdots, \tau_{n}^{\text{hi}}$, respectively. Notice that $\mathbb{E}_{\tau^\text{hi}}[\mathcal{G}^{\text{hi}}_{1}]= \mathbb{E}_{\tau^\text{hi}}[\mathcal{G}^{\text{hi}}_{2}]= \cdots= \mathbb{E}_{\tau^\text{hi}}[\mathcal{G}^{\text{hi}}_{n}] = Q^{\text{hi}}_\pi(s,a)$. The first-visit MC sample average is 
\begin{equation} \label{eq:mc}
    \hat{Q}^{\text{hi}}_{\pi,n}(s^{\text{hi}},a) = \frac{1}{n}\sum_{i=1}^{n} \mathcal{G}^{\text{hi}}_{i}.
\end{equation}
By the weak law of large numbers, $
    \underset{{n\rightarrow \infty}}{\text{lim}} \text{Pr}\big(|\hat{Q}^{\text{hi}}_{\pi,n}(s^{\text{hi}},a) -  Q^{\text{hi}}_\pi(s^{\text{hi}},a) | > \xi \big) = 0$,
for any positive number $\xi$. In addition, the variance of this unbiased sample average estimator is
\begin{equation} \label{eq:qvar}
    \text{Var}\Big[\hat{Q}^{\text{hi}}_{\pi,n}(s^{\text{hi}},a)\Big] =  \frac{\sigma_{\text{hi}}^2{(s^{\text{hi}},a)}}{n}.
\end{equation}

Using the low-fidelity generative environment and the method of control variates, we design an unbiased estimator for the expected long-term discounted returns that has a smaller variance than \eqref{eq:qvar}. Let $\tau_{i}^{\text{lo}}$ be the $i$th low-fidelity trajectory that is obtained from $\tau_{i}^{\text{hi}}$ by using $\mathcal{T}$ and the generative low-fidelity environment to evaluate $r^{\text{low}}_{t+1}=\mathcal{R}^\text{lo}(\mathcal{T}(s_t^{\text{hi}}),a_t)$. Let $\mathcal{G}^{\text{lo}}_{i}$ be the r.v. which corresponds to the long-term discounted return of $\tau_{i}^{\text{lo}}$. Notice that $\mathcal{G}^{\text{hi}}_{i}$ and $\mathcal{G}^{\text{lo}}_{i}$ are correlated r.vs. in this multifidelity setup. Based on those low-fidelity trajectories, the low-fidelity first-visit MC sample average is $
    \hat{Q}^{\text{lo}}_{\pi,n}(\mathcal{T}(s^{\text{hi}}),a) = \frac{1}{n}\sum_{i=1}^{n} \mathcal{G}^{\text{lo}}_{i}$ and
has a variance of $\text{Var}\Big[\hat{Q}^{\text{lo}}_{\pi,n}(\mathcal{T}(s^{\text{hi}}),a)\Big] =  \frac{\sigma_{\text{lo}}^2{(\mathcal{T}(s^{\text{hi}}),a)}}{n}$, where $\sigma_{\text{lo}}^2{(\mathcal{T}(s^{\text{hi}}),a)} = \mathbb{E}_{\tau^\text{lo}}\Big[\big(\mathcal{G}^{\text{lo}}-Q^{\text{lo}}_\pi(\mathcal{T}(s^{\text{hi}}),a)\big)^2 | s_0=\mathcal{T}(s^{\text{hi}}),a_0=a \Big]$ and $Q^{\text{lo}}_\pi(\mathcal{T}(s^{\text{hi}}),a)$ is the true population mean. 

Using the method of control variates presented in Subsection \ref{Sec:CV}, we propose the following multifidelity MC estimator: 
\begin{equation} \label{eq:cvmfrl}
\begin{aligned}
\hat{Q}^{\text{MFMC}}_{\pi,n}(s^{\text{hi}},a) = &\hat{Q}^{\text{hi}}_{\pi,n}(s^{\text{hi}},a) + \alpha^*_{s,a}\bigg( Q_\pi^{\text{lo}}(\mathcal{T}(s^{\text{hi}}),a) - \hat{Q}^{\text{lo}}_{\pi,n}(\mathcal{T}(s^{\text{hi}}),a) \bigg),
\end{aligned}
\end{equation}
where
\begin{equation}
    \alpha^*_{s,a} = \frac{\text{Cov}\big[\hat{Q}^{\text{hi}}_{\pi,n}(s^{\text{hi}},a),\hat{Q}^{\text{lo}}_{\pi,n}(\mathcal{T}(s^{\text{hi}}),a) \big]}{\text{Var}\big[ \hat{Q}^{\text{lo}}_{\pi,n}(\mathcal{T}(s^{\text{hi}}),a)\big]}.
\end{equation}
Notice that the estimator in \eqref{eq:cvmfrl} is unbiased and has a variance of
\begin{equation}
     \text{Var}\Big[\hat{Q}^{\text{MFMC}}_{\pi,n}(s^{\text{hi}},a)\Big] =  \big(1-\rho_{s,a}^2\big)\text{Var}\Big[\hat{Q}^{\text{hi}}_{\pi,n}(s^{\text{hi}},a)\Big], 
\end{equation}
where $\rho_{s,a}$ is the correlation coefficient between the low-fidelity and high-fidelity long-term discounted returns: 
\begin{equation}
   \rho_{s,a} = \frac{\text{Cov}\big[\hat{Q}^{\text{hi}}_{\pi,n}(s^{\text{hi}},a),\hat{Q}^{\text{lo}}_{\pi,n}(\mathcal{T}(s^{\text{hi}}),a) \big]}{\sqrt{\text{Var}\Big[\hat{Q}^{\text{hi}}_{\pi,n}(s^{\text{hi}},a)\Big]\text{Var}\Big[\hat{Q}^{\text{lo}}_{\pi,n}(\mathcal{T}(s^{\text{hi}}),a)\Big]}}.
\end{equation}
Therefore, the variance in estimating the value of a state-action pair under a policy $\pi$ can be reduced by a factor of $\big(1-\rho_{s,a}^2\big)$ when the low-fidelity data is exploited, although the budget of high-fidelity samples remains the same. Notice that
\begin{equation}
\begin{aligned}
   \text{Cov}\big[&\hat{Q}^{\text{hi}}_{\pi,n}(s^{\text{hi}},a),\hat{Q}^{\text{lo}}_{\pi,n}(\mathcal{T}(s^{\text{hi}}),a) \big] =  \text{Cov}\big[\frac{1}{n}\sum_{i=1}^{n} \mathcal{G}^{\text{hi}}_{i},\frac{1}{n}\sum_{i=1}^{n} \mathcal{G}^{\text{lo}}_{i} \big]
    = \frac{1}{n}
   \text{Cov}\big[ \mathcal{G}^{\text{hi}}_{i},\mathcal{G}^{\text{lo}}_{i} \big],
\end{aligned}
\end{equation}
because $\mathcal{G}^{\text{hi}}_{i},\mathcal{G}^{\text{lo}}_{j}$ are independent r.vs. $\forall i \neq j$. Hence,  $\text{Cov}\big[\hat{Q}^{\text{hi}}_{\pi,n}(s^{\text{hi}},a),\hat{Q}^{\text{lo}}_{\pi,n}(\mathcal{T}(s^{\text{hi}}),a) \big]$, $\text{Var}\Big[\hat{Q}^{\text{hi}}_{\pi,n}(s^{\text{hi}},a)\Big]$, and $\text{Var}\Big[\hat{Q}^{\text{lo}}_{\pi,n}(\mathcal{T}(s^{\text{hi}}),a)\Big]$ can all be estimated in practice based on the return data samples using the standard unbiased estimators for the variance and covariance. 

The reduced-variance estimator of \eqref{eq:cvmfrl} can be used to design a multifidelity Monte Carlo RL algorithm as shown in Algorithm 1 in Appendix A. This algorithm is based on the on-policy first-visit MC control algorithm with $\epsilon$-soft policies  \cite{sutton2018reinforcement} but uses the multifidelity estimator \eqref{eq:cvmfrl}. Algorithm 1 is based on the idea of generalized policy iteration. In the policy evaluation step (lines 11--18), the state-action value function is made consistent with the current policy by updating the estimated long-term discounted returns of a state-action pair $(s_t,a_t)$ using the control-variate-based estimator \eqref{eq:cvmfrl} (line 18). This update requires the estimation of the correlation between the low- and high- fidelity returns, which is done in lines 13--17. Next, in the policy improvement step (lines 19--20), the policy is made $\epsilon$-greedy with respect to the current state-action value function. In each episode, the agent needs to evaluate the policy in the low-fidelity environment to obtain $Q^\text{lo}_\pi$. This can be done in practice by collecting a large number of $m$ return samples from the cheap low-fidelity environment and setting $Q^\text{lo}_\pi(\mathcal{T}(s^{\text{hi}}),a) \approx \hat{Q}_{\pi,m+n}^{\text{lo}}(\mathcal{T}(s^{\text{hi}}),a)$.
%and using any policy evaluation method, including Monte Carlo and temporal difference learning \cite{sutton2018reinforcement}. 
The convergence of Algorithm 1 to the optimal $\epsilon$-greedy policy, $\pi^*_{\epsilon-\text{opt}}$,  along with its corresponding $\hat{Q}_*^\text{MFMC}$, is guaranteed under the same conditions that guarantee convergence for the on-policy first-visit MC control algorithm with $\epsilon$-soft policies  \cite{sutton2018reinforcement}. In the following subsection, we theoretically analyze the impacts of variance reduction on policy evaluation and policy improvement. 

\subsection{Theoretical analysis}
In this subsection we analyze the impacts of variance reduction on policy evaluation error and policy improvement by introducing two main theorems. Intermediate lemmas along with all the proofs can be found in Appendix B.

\subsubsection{Policy evaluation}
In policy evaluation, the task is to estimate the state-action value function of a given policy $\pi$. Trajectory samples are first generated by interacting with the environment using $\pi$, and the state-action value function is then estimated using either the single high-fidelity estimator \eqref{eq:mc} or the proposed multifidelity estimator \eqref{eq:cvmfrl}. To analyze the impacts of variance  reduction on policy evaluation error, we first derive a a Bernstein-type concentration inequality \cite{Bernstein} that relates the deviation between the sample average and the true mean to the sample size $n$, estimation accuracy parameters $\delta, \xi$, and the variance of a r.v. as follows.
\begin{lemma}
Let $X_1, X_2, \cdots, X_n$ be i.i.d. r.vs. with mean $\mathbb{E}[X_i]=\mu_{_X}$ and variance $\mathbb{E}[(X_i-\mu_{_X})^2]=\sigma_{_X}^2$, $\forall i\in [n]$. Furthermore, suppose that $X_i, \forall i$, are bounded almost surely with a parameter $b$, namely,  $\text{Pr}(|X_i - \mu_{_X}| \leq b)=1, \forall i$. Then 
\begin{equation} \label{eq:chernof}
    \text{Pr}\Bigg(\left\lvert\frac{1}{n}\sum_{i=1}^n X_i - \mu_{_X} \right\lvert \geq \xi \Bigg) \leq 2\text{exp}\bigg(\frac{-n\xi^2}{4\sigma_{_X}^2} \bigg)
\end{equation}
for $0\leq \xi \leq \sigma_{_X}^2/b$.
\end{lemma}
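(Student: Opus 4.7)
The plan is to prove the stated Bernstein-type concentration inequality via a Chernoff / moment generating function (MGF) argument, which is the standard route for bounds of this flavor. First I would center the variables by defining $Y_i := X_i - \mu_{_X}$, so that $\mathbb{E}[Y_i]=0$, $\mathbb{E}[Y_i^2]=\sigma_{_X}^2$, and $|Y_i|\leq b$ almost surely. The key preliminary estimate is the higher-moment bound
\begin{equation*}
\mathbb{E}[|Y_i|^k] \;\leq\; b^{k-2}\,\mathbb{E}[Y_i^2] \;=\; b^{k-2}\sigma_{_X}^2 \qquad (k\geq 2),
\end{equation*}
which follows from $|Y_i|^k = |Y_i|^{k-2}\,Y_i^2 \leq b^{k-2} Y_i^2$ under the boundedness hypothesis.

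Next I would control the MGF term by term, using $\mathbb{E}[Y_i]=0$ to kill the linear term:
\begin{equation*}
\mathbb{E}[e^{tY_i}] \;=\; 1 + \sum_{k\geq 2}\frac{t^k\mathbb{E}[Y_i^k]}{k!} \;\leq\; 1 + \frac{\sigma_{_X}^2}{b^2}\bigl(e^{tb}-1-tb\bigr).
\end{equation*}
For $0 \leq tb \leq 1$, the elementary inequality $e^{x}-1-x \leq x^2$ yields $\mathbb{E}[e^{tY_i}] \leq 1 + \sigma_{_X}^2 t^2 \leq \exp(\sigma_{_X}^2 t^2)$. Applying Markov's inequality to $\exp\!\bigl(t\sum_i Y_i\bigr)$ and using independence gives
\begin{equation*}
\text{Pr}\!\left(\frac{1}{n}\sum_{i=1}^n Y_i \geq \xi\right) \;\leq\; e^{-tn\xi}\prod_{i=1}^n \mathbb{E}[e^{tY_i}] \;\leq\; \exp\!\bigl(-tn\xi + n\sigma_{_X}^2 t^2\bigr),
\end{equation*}
and I would optimize the right-hand side over $t\in(0,1/b]$. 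The unconstrained minimizer is $t^{\star}=\xi/(2\sigma_{_X}^2)$, and the feasibility condition $t^{\star}\leq 1/b$ is equivalent to $\xi \leq 2\sigma_{_X}^2/b$, which is implied (with a factor-of-two margin) by the hypothesis $\xi \leq \sigma_{_X}^2/b$. Substituting $t^{\star}$ back collapses the exponent to $-n\xi^2/(4\sigma_{_X}^2)$, giving the one-sided tail.

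Finally, I would obtain the two-sided bound stated in the lemma by repeating the argument with $-Y_i$ in place of $Y_i$ (which satisfies the same moment and boundedness conditions) and combining the upper and lower tails by a union bound, which is exactly where the prefactor of $2$ comes from. The main subtlety of the proof is the bookkeeping of constants: one has to check that the hypothesized range of $\xi$ is simultaneously large enough for the optimized Chernoff exponent to reduce to the clean form $n\xi^2/(4\sigma_{_X}^2)$ and small enough to keep $t^{\star}$ in the regime where the quadratic upper bound on $e^{x}-1-x$ holds. That matching is precisely why the upper endpoint $\xi\leq \sigma_{_X}^2/b$ appears in the statement; no step of the argument requires nonstandard machinery beyond the series expansion of the MGF.
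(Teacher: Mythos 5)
Your proposal is correct and follows essentially the same route as the paper: a Chernoff/MGF argument with the moment bound implied by boundedness, the quadratic control $\mathbb{E}[e^{tY_i}]\leq \exp(\sigma_{_X}^2 t^2)$, optimization at $t^\star=\xi/(2\sigma_{_X}^2)$ whose feasibility is exactly what the restriction $\xi\leq\sigma_{_X}^2/b$ guarantees, and a union bound for the factor $2$. The only cosmetic difference is that you bound $e^{x}-1-x\leq x^2$ on $[0,1]$ directly, whereas the paper invokes Bernstein's condition and sums a geometric series under $\lambda\leq n/(2b)$; the two bookkeeping devices are interchangeable here.
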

Next, the concentration bound of Lemma 1 is used to derive the minimum sample size that is required to ensure that the sample average deviates by no more than $\xi$ from the true mean with high probability for both the high-fidelity estimator \eqref{eq:mc} and the multifidelity estimator \eqref{eq:cvmfrl}.

\begin{theorem}
To guarantee that
\begin{enumerate}[leftmargin=0.6cm]
    \item $\text{Pr}\Big(|\hat{Q}^{\text{hi}}_{\pi,n}(s^{\text{hi}},a) - {Q}^{\text{hi}}_{\pi}(s^{\text{hi}},a) | \leq \xi \Big) \geq 1-\delta $,  then $n\geq \frac{4 \sigma_{\text{hi}}^2{(s^{\text{hi}},a)}}{\xi^2} \text{log}(\frac{2}{\delta})$.
    \item $\text{Pr}\Big(|\hat{Q}^{\text{MFMC}}_{\pi,n}(s,a) - {Q}^{\text{hi}}_{\pi}(s^{\text{hi}},a) | \leq \xi \Big) \geq 1-\delta$, then $n\geq \frac{4(1-\rho_{s,a}^2)\sigma_{\text{hi}}^2{(s^{\text{hi}},a)}}{\xi^2} \text{log}(\frac{2}{\delta})$.
\end{enumerate}
%the difference between the single high-fidelity estimator $\hat{Q}^{\text{hi}}_{\pi,n}(s^{\text{hi}},a)$ and the true ${Q}^{\text{hi}}_{\pi}(s^{\text{hi}},a)$ is at most $\xi$, $n$ should be at least $\frac{4 \sigma_{\text{hi}}^2{(s^{\text{hi}},a)}}{\xi^2} \text{log}(\frac{2}{\delta})$. On the other hand,  to guarantee with probability at least $1-\delta$ that the difference between the multifidelity estimator $\hat{Q}^{\text{MFMC}}_{\pi,n}(s^{\text{hi}},a)$ and ${Q}^{\text{hi}}_{\pi}(s,a)$ is at most $\xi$, $n$ should be at least $\frac{4(1-\rho_{s,a}^2)\sigma_{\text{hi}}^2{(s^{\text{hi}},a)}}{\xi^2} \text{log}(\frac{2}{\delta})$.
\end{theorem}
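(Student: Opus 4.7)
The plan is to apply Lemma~1 to each of the two estimators after writing each one explicitly as a sample mean of i.i.d. bounded random variables with mean $Q^{\text{hi}}_{\pi}(s^{\text{hi}},a)$ and the appropriate per-sample variance. Once the correct variance is plugged into \eqref{eq:chernof}, each statement follows by setting the right-hand side equal to $\delta$ and solving for $n$.

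For part (1), I take the i.i.d. sequence $X_i := \mathcal{G}^{\text{hi}}_i$. These have mean $Q^{\text{hi}}_{\pi}(s^{\text{hi}},a)$ and per-sample variance $\sigma_{\text{hi}}^2(s^{\text{hi}},a)$, and since each reward lies in $[R^{\text{hi}}_{\min}, R^{\text{hi}}_{\max}]$ the returns are supported in $[R^{\text{hi}}_{\min}/(1-\gamma), R^{\text{hi}}_{\max}/(1-\gamma)]$, so they are bounded almost surely around their mean with some parameter $b_{\text{hi}}$. Lemma~1 then yields $\text{Pr}(|\hat Q^{\text{hi}}_{\pi,n}-Q^{\text{hi}}_\pi| \geq \xi) \leq 2\exp(-n\xi^2/(4\sigma_{\text{hi}}^2))$; demanding the right-hand side be at most $\delta$ and rearranging produces $n \geq (4\sigma_{\text{hi}}^2/\xi^2)\log(2/\delta)$.

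For part (2), I rewrite the multifidelity estimator \eqref{eq:cvmfrl} as the sample mean $\frac{1}{n}\sum_{i=1}^n Y_i$ with
\[
Y_i \;:=\; \mathcal{G}^{\text{hi}}_i \;+\; \alpha^*_{s,a}\bigl(Q^{\text{lo}}_\pi(\mathcal{T}(s^{\text{hi}}),a)-\mathcal{G}^{\text{lo}}_i\bigr).
\]
The $Y_i$ are i.i.d. because the trajectory pairs $(\tau^{\text{hi}}_i,\tau^{\text{lo}}_i)$ are i.i.d. across $i$; they have mean $Q^{\text{hi}}_\pi(s^{\text{hi}},a)$ by the unbiasedness argument given immediately after \eqref{eq:cvmfrl}; and their per-sample variance is exactly $(1-\rho_{s,a}^2)\sigma_{\text{hi}}^2(s^{\text{hi}},a)$, which is simply $n$ times the variance expression already derived in the paper for $\hat Q^{\text{MFMC}}_{\pi,n}$. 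Boundedness of $Y_i$ follows from joint boundedness of $\mathcal{G}^{\text{hi}}_i$ and $\mathcal{G}^{\text{lo}}_i$ together with the fact that $\alpha^*_{s,a}$ is a finite deterministic constant. Applying Lemma~1 with $\sigma_X^2$ replaced by $(1-\rho_{s,a}^2)\sigma_{\text{hi}}^2$ and solving the tail inequality for $n$ gives the claimed bound $n \geq (4(1-\rho_{s,a}^2)\sigma_{\text{hi}}^2/\xi^2)\log(2/\delta)$.

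The calculations are routine inversions of the Bernstein-type bound; the main thing to be careful about is that the hypotheses of Lemma~1 are in force. Concretely I would verify (i) the i.i.d. structure of the $Y_i$, (ii) an explicit almost-sure bound $b$ on $|Y_i - Q^{\text{hi}}_\pi(s^{\text{hi}},a)|$ in terms of $R^{\text{hi}}_{\min},R^{\text{hi}}_{\max},R^{\text{lo}}_{\min},R^{\text{lo}}_{\max},\gamma$ and $|\alpha^*_{s,a}|$, and (iii) that the regime $0 \leq \xi \leq \sigma_X^2/b$ required by Lemma~1 holds for the accuracy levels of interest. These verifications are the only real obstacle; everything else is algebra.
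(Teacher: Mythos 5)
Your proposal matches the paper's own proof essentially step for step: part (1) is Lemma~1 (via its Corollary) applied to the i.i.d.\ bounded returns $\mathcal{G}^{\text{hi}}_i$, and part (2) rewrites $\hat{Q}^{\text{MFMC}}_{\pi,n}$ as the sample mean of exactly the same auxiliary variables $Y_i=\mathcal{G}^{\text{hi}}_i+\alpha^*_{s,a}\bigl(Q^{\text{lo}}_\pi(\mathcal{T}(s^{\text{hi}}),a)-\mathcal{G}^{\text{lo}}_i\bigr)$, with $\mathrm{Var}[Y_i]=(1-\rho_{s,a}^2)\sigma_{\text{hi}}^2(s^{\text{hi}},a)$ and an almost-sure bound justifying the Bernstein condition, before inverting the tail bound for $n$. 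The only difference is cosmetic (you invert Lemma~1 directly rather than citing the Corollary), and your closing caveats about boundedness and the regime $\xi\leq\sigma_X^2/b$ are the same implicit conditions the paper relies on.
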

The result of Theorem 1 highlights the benefit of using our proposed multifidelity estimator \eqref{eq:cvmfrl} for policy evaluation as opposed to using the single high-fidelity estimator of \eqref{eq:mc}. By leveraging the correlation between low- and high-fidelity returns $\rho_{s,a}$, the variance of the multifidelity estimator is reduced by a factor of $(1-\rho_{s,a}^2)$, which makes it possible to achieve a low estimation error at a reduced number of high-fidelity samples. 

\subsubsection{Policy improvement}
In policy improvement, a new policy $\pi^\prime$ is constructed by deterministically choosing the greedy action  with respect to the state-action value function of the original policy $\pi$, $Q_\pi^{\text{hi}}(s,a)$, at every state, that is, $\pi^\prime(s) \overset{.}{=} \underset{{a \in \mathcal{A}}}{\text{argmax}}~Q_\pi^{\text{hi}}(s,a), \forall s \in \mathcal{S} $. By the policy improvement theorem, $\pi^\prime$ is as good as or better than $\pi$ under the assumption that $Q_\pi^{\text{hi}}(s,a), \forall s,a$ is computed exactly.
%, which means policy evaluation is done with an infinite number of trajectories as the approximate action-value function approaches the true  $Q_\pi^{\text{hi}}(s,a), \forall s,a$ asymptotically. The requirement that this is satisfied $\forall s,a$ is what mandates exploration to ensure that all state-action pairs have a non-zero probability of being visited. 
In practice, the MDP is unknown, and the state-action value function is estimated based on a finite number of trajectories. Moreover, those trajectories are generated by following an exploratory policy, such as an $\epsilon$-soft policy. Because we are interested in studying how different estimators impact policy improvement, we consider a target state $s^{\text{hi}} \in \mathcal{S}^{\text{hi}}$ and assume that we have $n$ trajectories for each action $a\in \mathcal{A}$ at this target state. This assumption basically ensures that all actions at the target state $s^{\text{hi}}$ have been explored equally well and enables us to make fair comparisons about estimator performance.

%Recall that $Q_\pi^{\text{hi}}(s,a), \forall a \in \mathcal{A}$, is the true state-action value function of the current policy, whereas $\hat{Q}^{\text{MFMC}}_{\pi,n_{s,a}}(s,a),\forall a$ and  $\hat{Q}^{\text{hi}}_{\pi,n_{s,a}}(s,a),\forall a$ are the estimated values based on $n_{s,a},\forall a$ sampled trajectories using the single high-fidelity and proposed multifidelity estimator, respectively. 
Without loss of generality, suppose that $Q_\pi^{\text{hi}}(s^{\text{hi}},a_1) \geq Q_\pi^{\text{hi}}(s^{\text{hi}},a_2) \geq \cdots Q_\pi^{\text{hi}}(s^{\text{hi}},a_{|\mathcal{A}|})$. 
Let $\Delta_i = Q_\pi^{\text{hi}}(s^{\text{hi}},a_1)- Q_\pi^{\text{hi}}(s^{\text{hi}},a_i),\forall i \neq 1$. 
\begin{comment}
\begin{equation}
\begin{aligned}
\Delta_3^2 \geq \Delta_2^2 \rightarrow \Delta_3^2 c^2  \geq \Delta_2^2 c^2 \rightarrow \\ \Delta_3^2 c^2 + \Delta_3^2\Delta_2^2 \geq \Delta_2^2 c^2+ \Delta_3^2\Delta_2^2 \\
\Delta_3^2( c^2+ \Delta_2^2) \geq \Delta_2^2 (c^2+ \Delta_3^2) \\
\Delta_3^2  \geq \frac{\Delta_2^2 (c^2+ \Delta_3^2)}{( c^2+ \Delta_2^2)} \\
\frac{\Delta_3^2}{(c^2+ \Delta_3^2)}  \geq \frac{\Delta_2^2 }{( c^2+ \Delta_2^2)}
\end{aligned}
\end{equation}
% you cAN ALSO SHOW THAT F(X) = X^2/(X^2+C) is always monotonously increasing, F'(X) >= 0 forall X>=0.
\end{comment}
We analyze the probability that $a_1$, which is the greedy action given the true $Q_\pi^{\text{hi}}(s^{\text{hi}},a)$, is the greedy action with respect to the single- and multifidelity estimators in our next theorem. 
\begin{theorem}
Suppose that the number of trajectories from a state-action pair at a target state $s^{\text{hi}} \in \mathcal{S}^{\text{hi}}$ is the same for all actions $a \in \mathcal{A}$ and that $a_1$ is the greedy action with respect to the true $Q^\text{hi}_\pi(s^{\text{hi}},a)$. Furthermore, suppose that 
$\mathcal{P}^{\text{hi}}(s^{\text{hi}}|s^{\text{hi}^\prime},a)\geq \beta(s^{\text{hi}}), \forall s^{\text{hi}} \in \mathcal{S}^{\text{hi}}$. Then
\begin{enumerate}[leftmargin=0.6cm]
    \item $\text{Pr}\big(a_1= \underset{{a \in \mathcal{A}}}{\text{argmax}}~\hat{Q}^{\text{hi}}_{\pi,n}(s^{\text{hi}},a) \big) \geq \prod_{i=2}^{|\mathcal{A}|} \frac{\Delta^2_i}{\Delta^2_i+\text{Var}[\hat{Q}^{\text{hi}}_{\pi,n}(s^{\text{hi}},a_1)]+\text{Var}[\hat{Q}^{\text{hi}}_{\pi,n}(s^{\text{hi}},a_i)]}$.
    \item $\text{Pr}\big(a_1= \underset{{a \in \mathcal{A}}}{\text{argmax}}~\hat{Q}^{\text{MFMC}}_{\pi,n}(s^{\text{hi}},a) \big) \geq \prod_{i=2}^{|\mathcal{A}|} \frac{\Delta^2_i}{\Delta^2_i+(1-\rho^2_{s,a_1})\text{Var}[\hat{Q}^{\text{hi}}_{\pi,n}(s^{\text{hi}},a_i)]+(1-\rho^2_{s,a_i})\text{Var}[\hat{Q}^{\text{hi}}_{\pi,n}(s^{\text{hi}},a_i)]}$.
\end{enumerate}
\end{theorem}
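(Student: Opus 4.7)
The plan is to reduce the multi-action maximization event to a collection of pairwise comparisons, bound each pairwise probability via Cantelli's one-sided inequality, and then recombine the pairwise bounds into a product using a monotone correlation (FKG-type) argument. Parts (1) and (2) follow from the same template, differing only in which variance is substituted into the pairwise step.

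First, I would rewrite the success event as an intersection: $\{a_1 = \underset{a}{\text{argmax}}\,\hat{Q}_{\pi,n}(s^{\text{hi}},a)\} = \bigcap_{i=2}^{|\mathcal{A}|} A_i$ with $A_i := \{\hat{Q}_{\pi,n}(s^{\text{hi}}, a_1) \geq \hat{Q}_{\pi,n}(s^{\text{hi}}, a_i)\}$, where $\hat{Q}_{\pi,n}$ denotes either the high-fidelity or the MFMC estimator. The recurrence assumption $\mathcal{P}^{\text{hi}}(s^{\text{hi}}|\cdot,\cdot)\geq \beta(s^{\text{hi}})$ is what guarantees a sample budget of $n$ trajectories per action at the target state. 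Under first-visit MC each episode contributes the return from its first visit to exactly one $(s^{\text{hi}},a_i)$ bucket, so the $|\mathcal{A}|$ estimators $\hat{Q}_{\pi,n}(s^{\text{hi}}, a_j)$ can be treated as mutually independent across $j$.

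Second, for each $i \geq 2$ I would set $Y_i := \hat{Q}_{\pi,n}(s^{\text{hi}}, a_1) - \hat{Q}_{\pi,n}(s^{\text{hi}}, a_i)$. By independence, $\mathbb{E}[Y_i] = \Delta_i$ and $\text{Var}[Y_i] = \text{Var}[\hat{Q}_{\pi,n}(s^{\text{hi}}, a_1)] + \text{Var}[\hat{Q}_{\pi,n}(s^{\text{hi}}, a_i)]$. Cantelli's one-sided Chebyshev inequality applied to $Y_i-\Delta_i$ gives $\text{Pr}(A_i) = \text{Pr}(Y_i \geq 0) \geq \Delta_i^2/(\Delta_i^2 + \text{Var}[Y_i])$, which already matches a single factor of the claimed product.

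Third, I would upgrade these marginal bounds into a product lower bound on $\text{Pr}(\bigcap_i A_i)$. Conditioning on $\hat{Q}_{\pi,n}(s^{\text{hi}}, a_1) = q$, each event $A_i$ has conditional probability $f_i(q) := \text{Pr}(\hat{Q}_{\pi,n}(s^{\text{hi}}, a_i) \leq q)$, a non-decreasing function of $q$, and the $\{A_i\}_{i\geq 2}$ are conditionally independent given $\hat{Q}_{\pi,n}(s^{\text{hi}}, a_1)$. Hence $\text{Pr}(\bigcap_i A_i) = \mathbb{E}[\prod_i f_i(\hat{Q}_{\pi,n}(s^{\text{hi}}, a_1))] \geq \prod_i \mathbb{E}[f_i(\hat{Q}_{\pi,n}(s^{\text{hi}}, a_1))] = \prod_i \text{Pr}(A_i)$, where the inequality is Chebyshev's sum inequality (FKG for monotone functions of a single random variable). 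Plugging in the Cantelli bound yields part (1); part (2) follows identically after substituting $\text{Var}[\hat{Q}^{\text{MFMC}}_{\pi,n}(s^{\text{hi}}, a_j)] = (1-\rho^2_{s,a_j})\text{Var}[\hat{Q}^{\text{hi}}_{\pi,n}(s^{\text{hi}}, a_j)]$ established earlier in this subsection.

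The main obstacle is the product step: a naive union bound would give only $1 - \sum_{i\geq 2}\text{Var}[Y_i]/(\Delta_i^2 + \text{Var}[Y_i])$, which is qualitatively weaker and can become vacuous as $|\mathcal{A}|$ grows. Obtaining the clean product therefore hinges on exploiting the positive association of the $A_i$ through the monotone correlation argument, not a union bound. A minor secondary care-point is verifying that the independence of $\hat{Q}_{\pi,n}(s^{\text{hi}}, a_j)$ across $j$ is compatible with the on-policy data collection used by Algorithm 1; under first-visit MC with $\epsilon$-soft exploration and the recurrence assumption, each episode feeds exactly one action bucket, which is what the argument needs.
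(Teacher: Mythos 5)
Your overall skeleton matches the paper's: decompose the success event into pairwise comparisons $B_i=\hat{Q}(s^{\text{hi}},a_1)-\hat{Q}(s^{\text{hi}},a_i)\ge 0$, lower-bound each pairwise probability with Cantelli's inequality, and recombine into a product. However, there is a genuine gap in how you handle dependence across actions. You declare the per-action estimators $\hat{Q}_{\pi,n}(s^{\text{hi}},a_j)$ mutually independent on the grounds that "each episode feeds exactly one action bucket," but under first-visit MC with an exploratory policy a single trajectory can visit $(s^{\text{hi}},a_1)$ and later $(s^{\text{hi}},a_i)$, contributing overlapping return segments to both buckets; the estimators are then correlated. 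This is precisely the situation the hypothesis $\mathcal{P}^{\text{hi}}(s^{\text{hi}}|s^{\text{hi}\prime},a)\geq \beta(s^{\text{hi}})$ is there to address: the paper uses it to argue that the cross-action return covariance is nonnegative, so that $\text{Var}[B_i]\le \text{Var}[\hat{Q}(s^{\text{hi}},a_1)]+\text{Var}[\hat{Q}(s^{\text{hi}},a_i)]$ (the denominators in the theorem are upper bounds on $\text{Var}[B_i]$, not exact values) and the events $\{B_i\ge 0\}$ are positively associated. You instead misread that assumption as merely guaranteeing the equal per-action sample budget, which is a separate, explicitly stated hypothesis; as a result your proof never uses the recurrence condition for its actual purpose, and both of your key steps (variance additivity and the conditional-independence factorization given $\hat{Q}(s^{\text{hi}},a_1)$) silently rely on an independence that does not hold in the setting the theorem is meant to cover.

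On the positive side, if independence across actions did hold, your third step is a genuinely cleaner argument than the paper's: the paper simply asserts that positive correlation of the events yields $\text{Pr}(\cap_i\{B_i\ge 0\})\ge\prod_i\text{Pr}(B_i\ge 0)$, whereas your conditioning on $\hat{Q}(s^{\text{hi}},a_1)$ plus the monotone (FKG/Chebyshev-association) inequality $\mathbb{E}[\prod_i f_i(X)]\ge\prod_i\mathbb{E}[f_i(X)]$ gives a rigorous derivation of that product. To repair your proof in the dependent case you would need to (i) invoke the recurrence condition to establish $\text{Cov}[\hat{Q}(s^{\text{hi}},a_1),\hat{Q}(s^{\text{hi}},a_i)]\ge 0$ so Cantelli can be applied with the stated denominator, and (ii) replace conditional independence with a positive-association argument for the joint events, which is the part the paper itself treats only informally.
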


Notice that when $|\rho_{s,a_2}| \rightarrow 1$, the lower bound in the result of Theorem $2$ approaches $1$, which means that the correct greedy action $a_1$ can be selected with certainty when the reduced-variance multifidelity estimator \eqref{eq:cvmfrl} is adopted. Combining the results of Theorems 1 and 2, the proposed \texttt{MFMCRL} algorithm is expected to outperform its single high-fidelity Monte Carlo counterpart in terms of learning a better policy under a given budget of high-fidelity environment interactions. 

\section{Numerical experiments}
%\vspace{-3mm}
In this section we  empirically evaluate the performance of the proposed \texttt{MFMCRL} algorithm on synthetic MDP problems and on a NAS use case.
Our codes and all experimental details can be found in Appendix C.
%\vspace{-3mm}
\subsection{Synthetic MDPs}
We synthesize multifidelity random MDP problems with state space cardinality $|\mathcal{S}|$ and action space cardinality $|\mathcal{A}|$. The high-fidelity transition and reward functions, $\mathcal{P}^{\text{hi}}$ and $\mathcal{R}^{\text{hi}}$, respectively, are first generated based on a random process as detailed in Appendix C.2. Next, for a given $\mathcal{P}^{\text{hi}}$ and $\mathcal{R}^{\text{hi}}$, the corresponding $\mathcal{P}^{\text{low}}$ and $\mathcal{R}^{\text{low}}$ are generated by injecting Gaussian noise to meet a desired signal-to-noise  ratio. Specifically,  we generate a random matrix $\mathcal{P}_N$ of size $|\mathcal{S}|\times |\mathcal{A}| \times |\mathcal{S}|$ from a normally distributed r.v. with mean $0$ and variance $\sigma^2_\mathcal{P}$, and set $\mathcal{P}^{\text{low}} = \mathcal{P}^{\text{hi}} + \mathcal{P}_N$. $\mathcal{P}^{\text{low}}$ is then appropriately normalized so that $\sum_{s^{\text{lo}^\prime} \in \mathcal{S}} \mathcal{P}^\text{lo}(s^{\text{lo}^\prime}|s^\text{lo},a) = 1$. Similarly, we generate a random matrix $\mathcal{R}_N$ of size $|\mathcal{S}| \times |\mathcal{A}|$ from a normally distributed r.v. with mean $0$ and variance $\sigma^2_\mathcal{R}$ and set $\mathcal{R}^{\text{low}} = \mathcal{R}^{\text{hi}} + \mathcal{R}_N$. $\mathcal{P}^{\text{hi}}$ and $\mathcal{R}^{\text{hi}}$ are then encapsulated within a gym-like environment with which the agent can interact by exchanging sample tuples of the form $(s^\text{hi}, a, r^\text{hi},s^{\text{hi}^\prime})$. Similarly,  $\mathcal{P}^{\text{lo}}$ and $\mathcal{R}^{\text{lo}}$ are encapsulated within a gym-like environment to form the low-fidelity environment. In this experiment, both low- and high-fidelity environments share the same state-action space---that is, $\mathcal{T}$ is an identity transformation---yet the transition and reward functions of the low-fidelity environment are different since they are corrupted with noise. Notice that even if the agent could draw an infinite number of samples from $\mathcal{P}^{\text{lo}}$ and $\mathcal{R}^{\text{lo}}$, it would not be able to recover $\mathcal{P}^{\text{hi}}$ and $\mathcal{R}^{\text{hi}}$ since $\mathcal{P}^{\text{lo}}$ and $\mathcal{R}^{\text{lo}}$ underneath the low-fidelity environment themselves are corrupted. This situation mimics what happens in practice when we attempt to learn $\mathcal{P}^{\text{lo}}$ and $\mathcal{R}^{\text{lo}}$ based on real data and build an RL environment off those learned functions to train the agent. 

After constructing the multifidelity environments, we train an RL agent using the proposed \texttt{MFMCRL} algorithm over $10\text{K}$ high-fidelity episodes, where a training episode is defined to be a trajectory that ends at a terminal state. The \texttt{MFMCRL} agent interacts with the low-fidelity environment as shown in Algorithm 1, to generate reduced-variance estimates of the state-action value function.  As a baseline for comparison, we train another RL agent (\texttt{MCRL}) using the standard the first-visit MC control algorithm over $10\text{K}$ high-fidelity episodes \cite{sutton2018reinforcement}. We set $\gamma$ and $\epsilon$  to $0.99$ and $0.1$, respectively. Every $50$ training episodes, the greedy policy w.r.t to the estimated $Q$ function is used to test the performance of the agent on $200$ test episodes. We repeat the whole experiment with $36$ different random seeds (to fully leverage our $36$ core machine) and report the mean and standard deviation (across different seeds) of the test episode rewards in Figure \ref{fig:Perf}(a).
One can  observe that for a given budget of high-fidelity episodes, the proposed \texttt{MFMCRL} algorithm outperforms \texttt{MCRL} in terms of policy performance, with performance improving as the RL agent collects more low-fidelity samples ($\#\tau^\text{lo}$ refers to the number of low-fidelity trajectories started from a state-action pair). In Figure \ref{fig:Perf}(b), we vary the SNR of the low-fidelity environment and observe that performance improves as SNR increases. This is expected because the low- and high-fidelity environments are better correlated at higher SNRs. 
Notice that when the SNR of the low-fidelity environment is -10 dB, there is no benefit from doing multifidelity RL. The reason is that the low- and high-fidelity environments are too weakly correlated to benefit from multifidelity estimation. In fact, for this case $\mathbb{E}_{s,a,s^\prime}[|\mathcal{P}^\text{hi}-\mathcal{P}^\text{lo}|]=
0.275\pm0.33$, and $\mathbb{E}_{s,a}[|\mathcal{R}^\text{hi}-\mathcal{R}^\text{lo}|]=1.029 \pm 0.024$, compared with the other extreme case (SNR +3dB) for which $\mathbb{E}_{s,a,s^\prime}[|\mathcal{P}^\text{hi}-\mathcal{P}^\text{lo}|]=
0.009\pm0.0002$, and $\mathbb{E}_{s,a}[|\mathcal{R}^\text{hi}-\mathcal{R}^\text{lo}|]=0.230 \pm 0.006$. This is also evident in Figure \ref{fig:Perf}(c), where we show the mean variance reduction factor $\mathrm{Var}[\hat{Q}^{\mathrm{MFMC}}]/\mathrm{Var}[\hat{Q}^{\mathrm{hi}}]$ estimated based off the last $1$K training episodes. When the low-fidelity environment is less noisy (higher SNR), more variance reduction can be attained.

\begin{figure}[h] 
\centering
\subfloat[$\Sigma^\text{lo}$ has an SNR of -3dB]{\includegraphics[width=0.33\columnwidth]{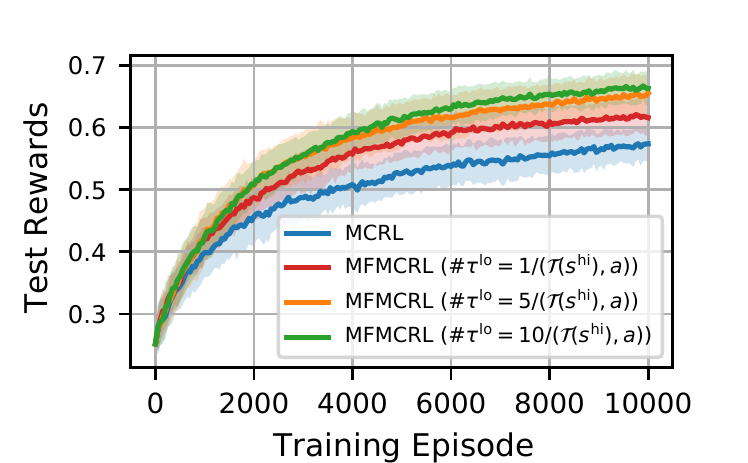}}
\hfill
\centering
\subfloat[$\#\tau^\mathrm{lo} = 10/(\mathcal{T}(s^\mathrm{hi}),a))$]{\includegraphics[width=0.33\columnwidth]{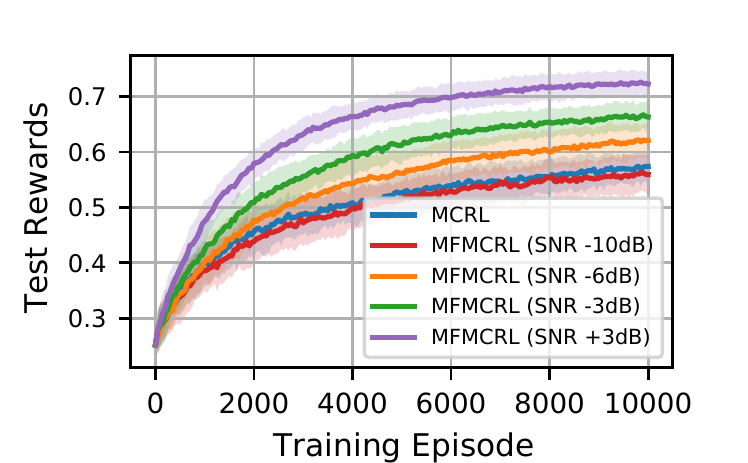}} 
\hfill
\centering
\subfloat[Variance reduction factor]{\includegraphics[width=0.33\columnwidth]{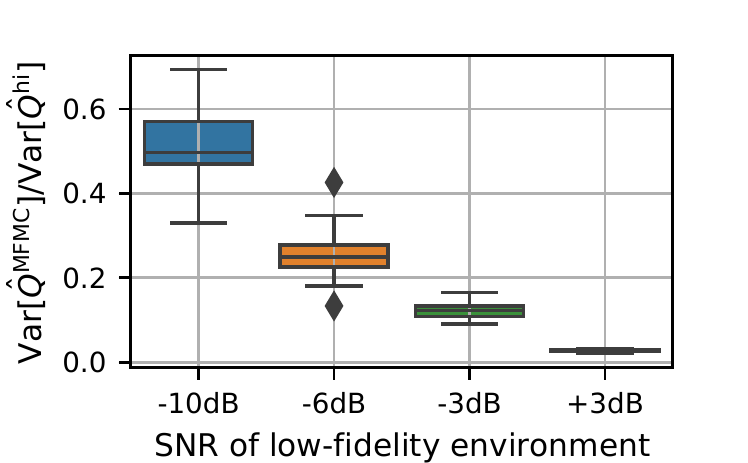}} 
\hfill
\caption{Mean and standard deviation of test episode rewards for the proposed \texttt{MFMCRL} during training: (a) test episode rewards improve with increasing number of low-fidelity samples ($\#\tau^\text{lo}$); (b) test episode rewards improve with less noisy low-fidelity environments; (c) variance reduction factor improves when low- and high-fidelity environments are more correlated. These results are based on a random MDP with $|\mathcal{S}|=200, |\mathcal{A}|=8$. %\PB{increse the size of the labels}
}
\label{fig:Perf}
\end{figure}
\vspace{-3mm}
\subsection{NAS}
In NAS, the task is to discover high-performing neural architectures with respect to a given training dataset over a predefined search space. While many earlier works attempted to design RL-based NAS algorithms, \cite{baker2016designing, zoph2016neural, jaafra2018review}, it has since become clear that the sample complexity of RL is too high to be competitive with state-of-the-art NAS methods \cite{balaprakash2019scalable, white2019bananas}. In this experiment we  study how multifidelity RL can improve learning in NAS over standard RL, which could serve to catalyze future work in this direction to make RL more competitive in NAS.

For this experiment we use the tabular dataset of NAS-Bench-201 \cite{dong2020bench} to construct multifidelity RL environments as detailed in Appendix C.3. In summary, the RL agent sequentially configures the nodes of an architecture (inducing an MDP), after which the architecture is trained on the training dataset for $L$ epochs, and the validation accuracy on a held-out validation data set is provided to the agent as a reward. By maximizing the total rewards, high-performing architectures can be discovered. NAS-Bench-201 reports the validation accuracy curves for all the architectures in the search space
\begin{wrapfigure}[20]{r}{0.5\textwidth}
%\vspace{-8mm}
\includegraphics[width=0.5\textwidth]{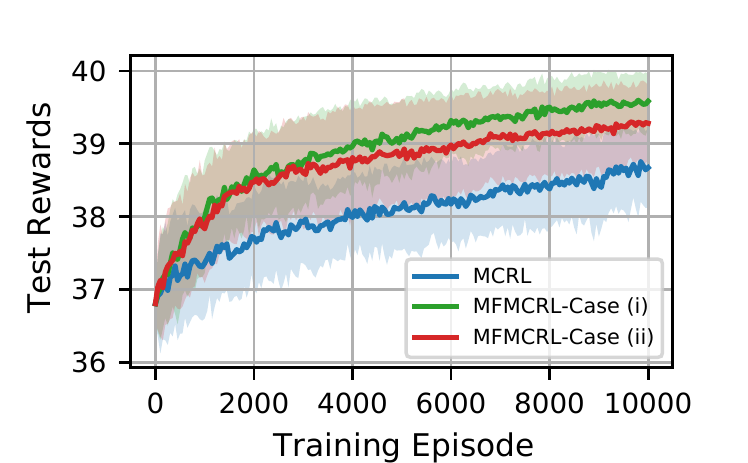}%%
\caption{Mean and standard deviation of test episode rewards for the proposed \texttt{MFMCRL} during training on multifidelity NAS environments. See text for description of the two multifidelity scenarios (i) and (ii). In both cases, $\#\tau^\mathrm{lo} = 5/(\mathcal{T}(s^\mathrm{hi}),a))$. }
\label{fig:nas}
\end{wrapfigure}
as a function of the number of training epochs and for three image data sets. We construct two multifidelity scenarios as follows. In both scenarios, the validation accuracy of an architecture at the end of training (i.e., at $L=200$ epochs) is used as a high-fidelity reward in the high-fidelity environment. For the low-fidelity environment, we have two cases: (i) low-fidelity environment is identical to the high-fidelity environment except for the reward function, which is now the validation accuracy at the $L=10$th training epoch, and  (ii) low-fidelity environment is defined for a smaller search space and the reward function is the validation accuracy of an architecture at the $L=10$th training epoch. Note that in case (ii) the state space and dynamics differ between the low- and high-fidelity environments. For both cases, we train both our proposed \texttt{MFMCRL} and the \texttt{MCRL} exactly as we did in Section 4.1, and we report the mean and standard deviation of test episode rewards in Figure  \ref{fig:nas}. We can observe that our multifidelity RL framework does indeed improve over standard RL and that performance gains are higher when the low- and high-fidelity environments are more similar, case (i).

\vspace{-3mm}
\section{Conclusion}
In this paper we have studied the RL problem in the presence of a low- and a high-fidelity environment for a given control task, with the aim of improving the agent's performance in the high-fidelity environment with multifidelity data. We have proposed a multifidelity estimator based on the method of control variates, which uses low-fidelity data to reduce the variance in the estimation of the state-action value function.  The impacts of variance reduction on policy improvement and policy evaluation are theoretically analyzed, and a multifidelity Monte Carlo RL algorithm (\texttt{MFMCRL}) is devised. We show that for a finite budget of high-fidelity data, the \texttt{MFMCRL} agent can well exploit the cross-correlations between low- and high-fidelity data and yield superior performance. In our future work, we will study the design of a control-variate-based multifidelity RL framework with function approximation to solve continuous state-action space RL problems. 

\section{Broader impact}
\textit{Positive impacts:} The energy/cost associated with generating low-fidelity data is generally much smaller than that of high-fidelity data. By leveraging low-fidelity data to improve the learning of RL agents, greener agents are realized. \textit{Negative impacts:} Running multifidelity RL agent training with weakly-correlated low- and high-fidelity environments  can be wasteful of resources since the benefits in this case are not significant. 
\begin{comment}
\begin{table}
  \caption{Sample table title}
  \label{sample-table}
  \centering
  \begin{tabular}{lll}
    \toprule
    \multicolumn{2}{c}{Part}                   \\
    \cmidrule(r){1-2}
    Name     & Description     & Size ($\mu$m) \\
    \midrule
    Dendrite & Input terminal  & $\sim$100     \\
    Axon     & Output terminal & $\sim$10      \\
    Soma     & Cell body       & up to $10^6$  \\
    \bottomrule
  \end{tabular}
\end{table}

\begin{ack}
Use unnumbered first level headings for the acknowledgments. All acknowledgments
go at the end of the paper before the list of references. Moreover, you are required to declare
funding (financial activities supporting the submitted work) and competing interests (related financial activities outside the submitted work).
More information about this disclosure can be found at: \url{https://neurips.cc/Conferences/2022/PaperInformation/FundingDisclosure}.
Do {\bf not} include this section in the anonymized submission, only in the final paper. You can use the \texttt{ack} environment provided in the style file to autmoatically hide this section in the anonymized submission.
\end{ack}
\end{comment}

%%%%%%%%%%%%%%%%%%%%%%%%%%%%%%%%%%%%%%%%%%%%%%%%%%%%%%%%%%%%

%\clearpage
\bibliography{references}

\clearpage

\appendix

\section{Proposed \texttt{MFMCRL} algorithm}

\begin{algorithm}[H]
    \SetKwInOut{Input}{Input}
    \SetKwInOut{Output}{Output}
    \Input{Low-fidelity environment $\Sigma^{\text{lo}}$, High-fidelity environment $\Sigma^{\text{hi}}$, discount $\gamma$, exploration noise $\epsilon$, $\mathcal{T}: \mathcal{S}^{\text{hi}} \rightarrow \mathcal{S}^{\text{lo}}$, number of low-fidelity trajectories from a state-action pair $m$.}
    \Output{$\hat{Q}_{*}^{\text{MFMC}}(s^{\text{hi}},a), \forall (s^{\text{hi}},a) \in \mathcal{S^\text{hi} \times A}, \pi^*_{\epsilon-\text{opt}}$}
    \textbf{Initialize:} $\pi \leftarrow $ an arbitrary $\epsilon$-soft policy, 
    $\hat{Q}^{\text{MFMC}}(s^{\text{hi}},a)=0, \forall (s^{\text{hi}},a) \in \mathcal{S^{\text{hi}} \times A},~~~~~~~~~~~~~~~~~~~$  $RetsH(s^{\text{hi}},a)$ $\leftarrow$ empty list, $RetsL(s^{\text{hi}},a)$ $\leftarrow$ empty list, $\forall (s^\text{hi},a) \in \mathcal{S^\text{hi} \times A}$, $RetsLP(s^{\text{lo}},a)$ $\leftarrow$ empty list, $\forall (s^\text{lo},a) \in \mathcal{S^\text{lo} \times A}.$ \\  

\For{Episode $=1, 2, \cdots, $}
{
Generate a trajectory $\tau^\text{hi}$ by following $\pi$ in $\Sigma^{\text{high}}$, $\tau^\text{hi}
: s_0^\text{hi},a_0,r^\text{hi}_{1},\cdots, s_{T-1}^\text{hi},a_{T-1},s^{\text{hi}}_{T}$.\\

Evaluate low-fidelity reward function $\mathcal{R}^{\text{lo}}(\mathcal{T}(s^\text{hi}),a), \forall (s^\text{hi},a) \in \tau^\text{hi}$ to generate $\tau^\text{lo}
: s_0^\text{lo},a_0,r^\text{lo}_{1},\cdots, s_{T-1}^\text{lo},a_{T-1},s^{\text{lo}}_{T}$.

Collect $m$ additional trajectories for every $(s^\text{lo},a) \in \tau^\text{lo} $ by executing $\pi$\footnote{Because $\mathcal{S}^\text{lo} \subseteq \mathcal{S}^\text{hi}$, executing $\pi$ in $\Sigma^\text{lo}$ amounts to executing the $\epsilon$-soft policy derived based on $\hat{Q}^{\text{lo}}(s^\text{lo},a) = \sum_{s^\text{hi}: \mathcal{T}(s^\text{hi})=s^\text{lo}} \hat{Q}^{\text{MFMC}}(s^\text{hi},a)$.} in $\Sigma^\text{lo}$, and append the $m$ corresponding low-fidelity return samples to $RetsLP(s^{\text{lo}},a)$.

%Evaluate $\pi$ in $\Sigma^{\text{lo}}$ and estimate $\hat{Q}^{\text{lo}}_{m,n}(s,a), \forall (s,a) \in \tau^\text{hi}$

$\mathcal{G}^\text{hi} \leftarrow 0, \mathcal{G}^\text{lo} \leftarrow 0$

\For{$t=T-1,T-2,\cdots, 0$}
{

$\mathcal{G}^\text{hi} \leftarrow \gamma \mathcal{G}^\text{hi} + r^\text{hi}_{t+1}$\\
$\mathcal{G}^\text{lo} \leftarrow \gamma \mathcal{G}^\text{lo} + r^\text{lo}_{t+1}$\\

\If {$(s_t^\text{hi},a_t) \not\in s_0^\text{hi},a_0,\cdots,s_{t-1}^\text{hi},a_{t-1}$} 
{
\texttt{Append} $\mathcal{G}^\text{hi}$ to $RetsH(s_t^\text{hi},a_t)$\\
\texttt{Append} $\mathcal{G}^\text{lo}$ to $RetsL(s_t^\text{hi},a_t)$\\

$\mathbb{E}[\mathcal{G}^\text{hi}] \leftarrow \texttt{mean}[RetsH(s_t^\text{hi},a_t)]$\\
$\sigma^2[\mathcal{G}^\text{hi}] \leftarrow \texttt{var}[RetsH(s_t^\text{hi},a_t)]$\\
$\mathbb{E}[\mathcal{G}^\text{lo}] \leftarrow \texttt{mean}[RetsL(\mathcal{T}(s_t^\text{hi}),a_t)]$\\
$\sigma^2[\mathcal{G}^\text{lo}] \leftarrow \texttt{var}[RetsL(\mathcal{T}(s_t^\text{hi}),a_t)]$\\
$\rho[\mathcal{G}^\text{hi},\mathcal{G}^\text{lo}] \leftarrow \frac{\texttt{cov}[RetsH(s_t^\text{hi},a_t),RetsL(s_t^\text{hi},a_t)]}{\sqrt{\sigma^2[\mathcal{G}^\text{hi}]\sigma^2[\mathcal{G}^\text{lo}]}} $\\
$\hat{Q}^{\text{MFMC}}(s_t^\text{hi},a_t)\leftarrow\mathbb{E}[\mathcal{G}^\text{hi}] +\rho[\mathcal{G}^\text{hi},\mathcal{G}^\text{lo}]\sqrt{\frac{\sigma^2[\mathcal{G}^\text{hi}]}{\sigma^2[\mathcal{G}^\text{lo}]}}\Big(
\texttt{mean}[RetsLP(\mathcal{T}(s_t^\text{hi}),a)]-\mathbb{E}[\mathcal{G}^\text{lo}] \Big) $\\
$a^* \leftarrow \underset{a}{\text{argmax}}~ \hat{Q}^{\text{MFMC}}(s_t^\text{hi},a)$\\
$\forall a \in \mathcal{A}(s_t^\text{hi}):$ $\pi(a|s_t^\text{hi}) \leftarrow \begin{cases} 1-\epsilon+\epsilon/|\mathcal{A}(s_t^\text{hi})|~~~~\text{if}~~~~ a= a^* \\ \epsilon/|\mathcal{A}(s_t^\text{hi})|~~~\text{if}~~~~ a\neq a^* \end{cases}$
}
}
}

    \caption{\texttt{MFMCRL}: Multifidelity Monte Carlo RL} \label{alg1:MCMFRL}
\end{algorithm}

\section{Proofs}

\subsection{Proof of lemma 1}

\paragraph{Statement.} \textit{Let $X_1, X_2, \cdots, X_n$ be i.i.d. r.vs. with mean $\mathbb{E}[X_i]=\mu_{_X}$, and variance $\mathbb{E}[(X_i-\mu_{_X})^2]=\sigma_{_X}^2$, $\forall i\in [n]$. Furthermore, suppose that $X_i, \forall i$, are bounded almost surely with a parameter $b$, namely,  $\text{Pr}(|X_i - \mu_{_X}| \leq b)=1, \forall i$. Then 
\begin{equation} %\label{eq:chernof}
    \text{Pr}\Bigg(\left\lvert\frac{1}{n}\sum_{i=1}^n X_i - \mu_{_X} \right\lvert \geq \xi \Bigg) \leq 2\text{exp}\bigg(\frac{-n\xi^2}{4\sigma_{_X}^2} \bigg),
\end{equation}
for $0\leq \xi \leq \sigma_{_X}^2/b$.}

\begin{proof}
It is straightforward to show that r.vs. $X_i$ satisfy the Bernstein condition with parameter $b$:
\begin{equation} \label{eq:bern}
\nonumber
    |\mathbb{E}\big[(X_i-\mu_{_X})^k \big]| \leq \frac{1}{2} k! \sigma_{_X}^2 b^{k-2}, \forall k=3,4,\cdots.
\end{equation}
By applying a Chernoff bound and using Bernstein's condition, we obtain the following upper tail bound for the event $\frac{1}{n} \sum_{i=1}^n X_i - \mu_{_X}  \geq \xi$ given that $\lambda \in (0, \frac{n}{2b}]$:
\begin{equation}
\nonumber
\begin{aligned}
    \text{Pr}\Bigg(\frac{1}{n} \sum_{i=1}^n X_i - \mu_{_X}  \geq \xi \Bigg) &\leq e^{-\lambda\xi} \mathbb{E}\big[e^{\frac{\lambda}{n}(\sum_{i=1}^n X_i - \mu_{_X})}\big] 
    \\&= e^{-\lambda\xi} \mathbb{E}\big[\prod_{i=1}^n e^{\frac{\lambda}{n}( X_i - \mu_{_X})}\big] 
    \\& \overset{(i)}{=} e^{-\lambda\xi}\prod_{i=1}^n \mathbb{E}\big[ e^{\frac{\lambda}{n}( X_i - \mu_{_X})}\big]
    \\&\overset{(ii)}{\leq} \text{exp}\big(-\lambda \xi + \lambda^2 \frac{\sigma_{_X}^2}{n}\big), 
\end{aligned}
\end{equation}
where $(i)$ follows by independence of the r.vs. and $(ii)$ follows from using Bernstein's condition in the Taylor expansion of $\mathbb{E}\big[ e^{\frac{\lambda}{n}( X_i - \mu_{_X})}\big]$ and noting that  $\lambda \leq \frac{n}{2b}$:
\begin{equation}
\begin{aligned}
\mathbb{E}\big[ e^{\frac{\lambda}{n}( X_i - \mu_{_X})}\big] &= 1+\frac{\lambda^2}{2n^2}\sigma_X^2+\sum_{k=3}^\infty \frac{\lambda^k}{k!n^k}\mathbb{E}\big[ ( X_i - \mu_X)^k\big]
\\& \leq 1+\frac{\lambda^2 \sigma_X^2}{2n^2}+\frac{\lambda^2 \sigma_X^2}{2n^2}\sum_{k=3}^\infty \frac{\lambda^{k-2}b^{k-2}}{n^{k-2}}
\\& = 1+\frac{\lambda^2 \sigma_X^2}{2n^2}\bigg[\frac{1}{1-|\frac{\lambda b}{n}|} \bigg], ~\forall |\lambda| < \frac{n}{b}
\\& \leq 1+\frac{\lambda^2 \sigma_X^2}{n^2}, ~\forall |\lambda| \leq \frac{n}{2b}
\\& \leq \text{exp}\big(\frac{\lambda^2 \sigma_X^2}{n^2}\big).
\end{aligned}
\end{equation}
The tightest bound is then obtained by finding the $ \underset{\lambda \in (0,\frac{n}{2b}]}{\text{inf}}~\text{exp}\big(-\lambda \xi + \lambda^2 \frac{\sigma_{_X}^2}{n}\big)$. This is  attained at $\lambda^* = \frac{\xi n}{2\sigma_{_X}^2} \leq \frac{n}{2b}$, which yields the condition $\xi \leq \frac{\sigma^2_{_X}}{b}$. The factor of 2 in \eqref{eq:chernof} captures the two-sided event.
\end{proof}

\subsection{Corollary of Lemma 1}

The concentration bound of Lemma 1 can be used to derive the minimum sample size that is required to ensure that the sample average deviates by no more than $\xi$ from the true mean with high probability, as stated in Corollary 1.
\begin{corollary}
With probability at least $1-\delta$, the difference between the sample mean $\frac{1}{n}\sum_{i=1}^n X_i$ and the population mean $\mu_{_X}$ is at most $\xi$ if $n \geq \frac{4 \sigma_{_X}^2}{\xi^2} \text{log}(\frac{2}{\delta})$ and $\xi \leq \sigma_{_X}^2/b$.
\end{corollary}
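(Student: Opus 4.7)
The plan is to derive Corollary 1 as a direct consequence of Lemma 1 by inverting the exponential tail bound. The argument has essentially three mechanical steps and contains no real obstacle — the entire content is already embedded in Lemma 1.

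First, I would invoke Lemma 1 on the i.i.d.\ sequence $X_1,\dots,X_n$, which, under the almost-sure bound $|X_i - \mu_X|\le b$ and the restriction $0 \le \xi \le \sigma_X^2/b$, yields
\begin{equation}
\Pr\!\left(\left\lvert \tfrac{1}{n}\sum_{i=1}^n X_i - \mu_X \right\rvert \ge \xi\right) \le 2\exp\!\left(\frac{-n\xi^2}{4\sigma_X^2}\right).
\end{equation}
This is the only probabilistic input; everything that follows is algebra.

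Next, I would enforce that the right-hand side does not exceed the target failure probability $\delta$. Requiring $2\exp(-n\xi^2/(4\sigma_X^2)) \le \delta$ and taking logarithms on both sides gives the equivalent condition $\frac{n\xi^2}{4\sigma_X^2} \ge \log(2/\delta)$, which rearranges to the stated sample-size requirement
\begin{equation}
n \ge \frac{4\sigma_X^2}{\xi^2}\log\!\left(\frac{2}{\delta}\right).
\end{equation}
Taking the complementary event then gives $\Pr\!\left(\left\lvert \tfrac{1}{n}\sum_{i=1}^n X_i - \mu_X \right\rvert \le \xi\right) \ge 1-\delta$, which is exactly the corollary's conclusion.

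Finally, I would emphasize that the side condition $\xi \le \sigma_X^2/b$ is inherited verbatim from Lemma 1's hypothesis and must simply be restated; it is the admissibility range in which the Bernstein-style tail estimate is valid. Since the proof is purely an inversion of a closed-form bound, there is no real obstacle — the only thing to be careful about is tracking the factor of $2$ (from the two-sided event) inside the logarithm so that $\log(2/\delta)$, rather than $\log(1/\delta)$, appears in the final sample-complexity bound.
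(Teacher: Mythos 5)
Your proposal is correct and follows essentially the same route as the paper: apply Lemma 1, force the two-sided tail bound $2\exp(-n\xi^2/(4\sigma_X^2))$ below $\delta$ to obtain $n \geq \frac{4\sigma_X^2}{\xi^2}\log(\frac{2}{\delta})$, and pass to the complementary event, with the condition $\xi \leq \sigma_X^2/b$ carried over from the lemma. The paper's own proof is just a terser statement of the same inversion.
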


\begin{proof}
Set $n \geq \frac{4 \sigma_{_X}^2}{\xi^2} \text{log}(\frac{2}{\delta})$, and apply the concentration bound of Lemma 1 to find the probability of the complementary event $\text{Pr}\big(\left\lvert\frac{1}{n}\sum_{i=1}^n X_i - \mu_{_X} \right\lvert \leq \xi \big)$. $\blacksquare$
\end{proof}

The result of Corollary 1 relates the minimum sample size to the desired accuracy parameters $\xi, \delta$, as well as to the variance of the r.v. One can  observe that the smaller the variance of a r.v. is, the smaller the minimum required sample size can be. Put differently, for a given sample size and a significance parameter $\delta$, the maximum deviation error $\xi$ will be smaller for the r.v. with a smaller variance. %This result highlights the benefit of using our proposed multifidelity estimator \eqref{eq:cvmfrl} for policy evaluation, which has a reduced variance compared to the single high-fidelity estimator of \eqref{eq:mc}, as formally stated in the following theorem.

\subsection{Proof of Theorem 1}

\paragraph{Statement.} \textit{To guarantee that
\begin{enumerate}[leftmargin=0.6cm]
    \item $\text{Pr}\Big(|\hat{Q}^{\text{hi}}_{\pi,n}(s^{\text{hi}},a) - {Q}^{\text{hi}}_{\pi}(s^{\text{hi}},a) | \leq \xi \Big) \geq 1-\delta $, then $n\geq \frac{4 \sigma_{\text{hi}}^2{(s^{\text{hi}},a)}}{\xi^2} \text{log}(\frac{2}{\delta})$.
    \item $\text{Pr}\Big(|\hat{Q}^{\text{MFMC}}_{\pi,n}(s,a) - {Q}^{\text{hi}}_{\pi}(s^{\text{hi}},a) | \leq \xi \Big) \geq 1-\delta$, then $n\geq \frac{4(1-\rho_{s,a}^2)\sigma_{\text{hi}}^2{(s^{\text{hi}},a)}}{\xi^2} \text{log}(\frac{2}{\delta})$.
\end{enumerate}
}

\begin{proof}
The single high-fidelity estimator $\hat{Q}^{\text{hi}}_{\pi,n}(s^{\text{hi}},a)$ is a sample mean of $n$ $\mathcal{G}^{\text{hi}}_{i}$ r.vs., each with a variance of $\sigma^2_{\text{hi}}(s^{\text{hi}},a)$. Furthermore, $|\mathcal{G}^{\text{hi}}_{i}| \leq \frac{\text{max}\{R^{\text{hi}}_{\text{min}}, R^{\text{hi}}_{\text{max}} \}}{1-\gamma}$ almost surely, and hence Bernstein's condition is satisfied with parameter $b=\frac{\text{max}\{R^{\text{hi}}_{\text{min}}, R^{\text{hi}}_{\text{max}} \}}{1-\gamma}$. By a straightforward application of Corollary 1, the first statement of the theorem is proved. On the other hand, the multifidelity estimator $\hat{Q}^{\text{MFMC}}_{\pi,n}(s^{\text{hi}},a)$ is a sample mean of $n$ r.vs,  $Y_i=\mathcal{G}^{\text{hi}}_{i} + \alpha^*_{s,a}Q_\pi^{\text{lo}}(\mathcal{T}(s^{\text{hi}}),a)- \alpha^*_{s,a}\mathcal{G}^{\text{lo}}_{i}$. It is %$|Y_i| \leq \frac{R^{\text{hi}}_{\text{max}}}{1-\gamma} + |\alpha^*_{s,a}|\frac{2R^{\text{lo}}_{\text{max}}}{1-\gamma}$. Also, $|\alpha^*_{s,a}| \leq {\frac{\sigma_{\text{hi}}}{\sigma_{\text{lo}}}} \leq \frac{R^{\text{hi}}_{\text{max}}}{2\sigma_{\text{lo}}(1-\gamma)}$, where $\sigma_{\text{hi}}$ is upper bounded using Popoviciu's inequality. Since $\sigma_{\text{lo}}$ is finite, 
straightforward to see that there exists a parameter $b^\prime$ such that $|Y_i| \leq b^\prime$ and that Bernstein's condition  is also satisfied. Lastly, $\text{Var}[Y_i] =\text{Var}[\mathcal{G}^{\text{hi}}_{i}]+(\alpha^*_{s,a})^2\text{Var}[\mathcal{G}^{\text{lo}}_{i}] -2\alpha_{s,a}^*\text{Cov}[\mathcal{G}^{\text{hi}}_{i},\mathcal{G}^{\text{lo}}_{i}]=(1-\rho_{s,a}^2)\sigma^2_{\text{hi}}(s^{\text{hi}},a)$, and the second statement of the theorem again follows by the application of Corollary 1. $\blacksquare$
\end{proof}

\subsection{Proof of Theorem 2}

\paragraph{Statement.}
\textit{Suppose that the number of trajectories from a state-action pair at a target state $s^{\text{hi}} \in \mathcal{S}^{\text{hi}}$ is the same for all actions $a \in \mathcal{A}$ and that $a_1$ is the greedy action with respect to the true $Q^\text{hi}_\pi(s^{\text{hi}},a)$. Furthermore, suppose that 
$\mathcal{P}^{\text{hi}}(s^{\text{hi}^\prime}|s^{\text{hi}},a)\geq \beta(s^{\text{hi}}), \forall s^{\text{hi}^\prime} \in \mathcal{S}^{\text{hi}}$. Then
\begin{enumerate}[leftmargin=0.6cm]
    \item $\text{Pr}\big(a_1= \underset{{a \in \mathcal{A}}}{\text{argmax}}~\hat{Q}^{\text{hi}}_{\pi,n}(s^{\text{hi}},a) \big) \geq \prod_{i=2}^{|\mathcal{A}|} \frac{\Delta^2_i}{\Delta^2_i+\text{Var}[\hat{Q}^{\text{hi}}_{\pi,n}(s^{\text{hi}},a_1)]+\text{Var}[\hat{Q}^{\text{hi}}_{\pi,n}(s^{\text{hi}},a_i)]}$.
    \item $\text{Pr}\big(a_1= \underset{{a \in \mathcal{A}}}{\text{argmax}}~\hat{Q}^{\text{MFMC}}_{\pi,n}(s^{\text{hi}},a) \big) \geq \prod_{i=2}^{|\mathcal{A}|} \frac{\Delta^2_i}{\Delta^2_i+(1-\rho^2_{s,a_1})\text{Var}[\hat{Q}^{\text{hi}}_{\pi,n}(s^{\text{hi}},a_i)]+(1-\rho^2_{s,a_i})\text{Var}[\hat{Q}^{\text{hi}}_{\pi,n}(s^{\text{hi}},a_i)]}$.
\end{enumerate}
}

\begin{proof}
First, consider the single high-fidelity estimator $\hat{Q}^{\text{hi}}_{\pi,n}(s^{\text{hi}},a)$, and let r.v. $B^\text{hi}_i= \hat{Q}^{\text{hi}}_{\pi,n}(s^{\text{hi}},a_1) - \hat{Q}^{\text{hi}}_{\pi,n}(s^{\text{hi}},a_i)$. Then $\text{Pr}\big(a_1= \underset{{a \in \mathcal{A}}}{\text{argmax}}~\hat{Q}^{\text{hi}}_{\pi,n}(s,a) \big)=\text{Pr}(\text{min}(B^\text{hi}_2, \cdots, B^\text{hi}_{|\mathcal{A}|}) \geq 0)=
\text{Pr}(\cap_{i=2}^{|\mathcal{A}|} \{ B^\text{hi}_i  \geq 0 \}) \overset{(i)}{\geq} \prod_{i=2}^{|\mathcal{A}|} \text{Pr}(  B^\text{hi}_i  \geq 0)$, where $(i)$ follows because the events $\{B_i^{hi}\geq 0\}, \forall i$, are positively correlated.
%by the Harris  Fortuin–Kasteleyn–Ginibre (FKG) inequality and noting that $\forall i$ the event $\{Z_i^{hi}\geq 0\}$ is an increasing event, and the intersection of increasing events is increasing.    
%\textcolor{red}{follows because $\{Z^\text{hi}_i,Z^\text{hi}_{j} \}$ are positively correlated, i.e. $P(A \cap B) > P(A)P(B)$}. 
In addition, 
\begin{equation}
\begin{aligned}
  \text{Pr}(B^\text{hi}_i\geq 0) &= \text{Pr}(B^\text{hi}_i - \mathbb{E}[B^\text{hi}_i]\geq -\Delta_i)\\
  &\overset{(ii)}{\geq} \frac{\Delta_i^2}{\Delta_i^2 + \text{Var}[B^\text{hi}_i]}\\
  &\overset{(iii)}{\geq} \frac{\Delta^2_i}{\Delta^2_i+\text{Var}[\hat{Q}^{\text{hi}}_{\pi,n}(s^{\text{hi}},a_1)]+\text{Var}[\hat{Q}^{\text{hi}}_{\pi,n}(s^{\text{hi}},a_i)]},
\end{aligned}
\end{equation}
where $(ii)$ follows by Cantelli's inequality  and  $(iii)$ follows as long as $\text{Cov}[\hat{Q}^{\text{hi}}_{\pi,n}(s^{\text{hi}},a_1),\hat{Q}^{\text{hi}}_{\pi,n}(s^{\text{hi}},a_i)] \geq 0, \forall i$. Notice that $\text{Cov}[\hat{Q}^{\text{hi}}_{\pi,n}(s^{\text{hi}},a_1),\hat{Q}^{\text{hi}}_{\pi,n}(s^{\text{hi}},a_i)]=$ $ =  \text{Cov}\big[\frac{1}{n}\sum_{k=1}^{n} \mathcal{G}^{\text{hi}}_{k, a_1},\frac{1}{n}\sum_{k=1}^{n} \mathcal{G}^{\text{hi}}_{k,a_i} \big]= \frac{1}{n^2} \sum_{k}\sum_{l}\text{Cov}\big[ \mathcal{G}^{\text{hi}}_{k, a_1},\mathcal{G}^{\text{hi}}_{l,a_i} \big]$, where $\mathcal{G}^{\text{hi}}_{k, a_1}$ and $\mathcal{G}^{\text{hi}}_{k, a_1}$ are sample returns from the pairs $(s^{\text{hi}},a_1)$ and $(s^{\text{hi}},a_i)$, respectively. $\text{Cov}\big[ \mathcal{G}^{\text{hi}}_{k, a_1},\mathcal{G}^{\text{hi}}_{l,a_i} \big]$ is non-zero if the pairs $(s^{\text{hi}},a_1)$ and $(s^{\text{hi}},a_i)$ show up in the same trajectory; otherwise the two samples are independent, and the covariance is zero. If $(s^{\text{hi}},a_1)$ and $(s^{\text{hi}},a_i)$ do show up in the same trajectory, then they are non-negatively correlated as  $\mathcal{P}^{\text{hi}}(s^{\text{hi}}|s^{\text{hi}^\prime},a)\geq \beta(s^{\text{hi}}), \forall s^{\text{hi}} \in \mathcal{S}^{\text{hi}}$.\footnote{When $(s^{\text{hi}},a_1)$ and $(s^{\text{hi}},a_i)$ show up in the same trajectory, the trajectory can be broken into two pieces, one that starts at $(s^{\text{hi}},a_1)$ and ends at $(s^{\text{hi}},a_i)$ and another that starts $(s^{\text{hi}},a_i)$ and ends at $s^{\text{hi}}_T$. The returns of $(s^{\text{hi}},a_1)$ are the discounted sum of all rewards in the two pieces, whereas the returns of $(s^{\text{hi}},a_i)$ are the discounted sum of the rewards in the second piece only. The condition $\mathcal{P}^{\text{hi}}(s^{\text{hi}}|s^{\text{hi}^\prime},a)\geq \beta(s^{\text{hi}}), \forall s^{\text{hi}} \in \mathcal{S}^{\text{hi}}$ means the second piece is more likely to happen if the first piece happens and vice versa, which induces the non-negative correlation among the two return samples.} Hence, $\text{Cov}\big[ \mathcal{G}^{\text{hi}}_{k, a_1},\mathcal{G}^{\text{hi}}_{l,a_i} \big] \geq 0$. By following a similar argument, we can show the second part of the theorem.
\end{proof}

\begin{comment}
\begin{lemma}
Let r.v. $Z^\text{hi}_i= \hat{Q}^{\text{hi}}_{\pi,n_{s,a}}(s,a_1) - \hat{Q}^{\text{hi}}_{\pi,n_{s,a}}(s,a_i)$, then $\text{Cov}[Z^\text{hi}_i,Z^\text{hi}_{j}] \geq 0, \forall i \neq j$.
\end{lemma}
\begin{proof}
\begin{equation}
\begin{aligned}
    \text{Cov}&[Z^\text{hi}_i,Z^\text{hi}_{j}]=\text{Var}[Y_1]-\text{Cov}[Y_1,Y_j]-\text{Cov}[Y_1,Y_i]+\text{Cov}[Y_i,Y_j]=0
\end{aligned}
\end{equation}
\end{proof}
\end{comment}

\section{Experimental details}

\subsection{Computing infrastructure}
Our computational experiments and implementation of \texttt{MFMCRL} are  based on Python $3.8.11$ and NumPy $1.19.5$. Our experiments have been conducted on a $36$-core machine running CentOS Linux $7$ (Core) with an Intel Xeon E5-2695v4 CPU. We use \texttt{mpi4py} $3.1.2$ to run experiments with different random seeds in parallel. We use the API provided by \texttt{NASLib}\footnote{https://github.com/automl/NASLib} $0.1.0$ to retrieve the validation accuracy curves of all candidate architectures in NAS-Bench-201 and store them in a python dictionary to speed up reward retrieval for the RL agent. Data dictionaries and all our codes are available in the supplementary material. 

\subsection{Synthetic multifidelity MDPs}

We first present the random process that is used to generate the high-fidelity transition function $\mathcal{P}^{\text{hi}}$. The transition probability from a state $s^\text{hi}$ and action $a$  to  successor states $ s^{\text{hi}^\prime} \in \{0,1,\cdots,|\mathcal{S}^{\text{hi}}|-1\}$ is sampled from $ \mathcal{U}_{|\mathcal{S}^{\text{hi}}|-1} \cdot \mathbbm{1}$, where $\mathcal{U}_{|\mathcal{S}^{\text{hi}}|-1}$ is a uniform random vector of size $|\mathcal{S}^{\text{hi}}|-1$ defined over the interval $[0,1]$, $\mathbbm{1}$ is a random binary vector of size $|\mathcal{S}^{\text{hi}}|-1$ whose $i$-th element is 1 if $\mathcal{U}^i_{1} >\mathcal{U}^{\text{ref}}_1 $, and $\cdot$ denotes element-wise multiplication. Here, $\mathcal{U}^i_{1}, \mathcal{U}^{\text{ref}}_1 \sim \mathcal{U}[0,1]$, and  $\mathcal{U}_{\text{ref}}$ is sampled once per $(s^\text{hi},a)$. The sampled vector is normalized such that $\sum_{s^{\text{hi}^\prime} \in \{0,\cdots, |\mathcal{S}^{\text{hi}}|-1\}} \mathcal{P}^\text{hi}(s^{\text{hi}^\prime}|s^\text{hi},a) = 1-p_t$, where $p_t$ is the transition probability from any state $s^{\text{hi}^\prime} \in \{0,\cdots, |\mathcal{S}^{\text{hi}}|-1\}$ to the terminal state $s^{\text{hi}^\prime} \in \{|\mathcal{S}^{\text{hi}}|\}$. In our experiments we set $p_t = 0.1$. $s^{\text{hi}^\prime} \in \{|\mathcal{S}^{\text{hi}}|\}$ is an absorbing state and so the transition probability from this state to any other state is 0 regardless of the action, and the transition probability to itself is 1. The high-fidelity reward function at a state $s^{\text{hi}^\prime} \in \{0,1,\cdots,|\mathcal{S}^{\text{hi}}|-1\}$ and action $a$, $\mathcal{R}^\text{hi}(s^{\text{hi}},a)$ is sampled from $\mathcal{U}_1 \cdot \mathbbm{1}[i]$, where $\mathbbm{1}[i]$ is an element chosen uniformly at random from the random binary vector $\mathbbm{1}$ described earlier. For the terminal state, $s^{\text{hi}} \in \{|\mathcal{S}^{\text{hi}}|\}$, $\mathcal{R}^\text{hi}(s^{\text{hi}},a)=0$.  The random generation process of $\mathcal{P}^{\text{hi}}$ and $\mathcal{R}^{\text{hi}}$ is an adaptation of that in the $\texttt{pymdptoolbox}$ and is implemented in the $\texttt{synthetic\_mdp\_envs}/\texttt{MDPGen}$ class in our code. In Figure \ref{fig:Perf2}, we provide results similar to those of Figure \ref{fig:Perf} in the main text but for a random MDP with $|\mathcal{S}|=1000,  |\mathcal{A}|=12$.

\begin{figure}[h] 
\centering
\subfloat[$\Sigma^\text{lo}$ has an SNR of -3dB]{\includegraphics[width=0.33\columnwidth]{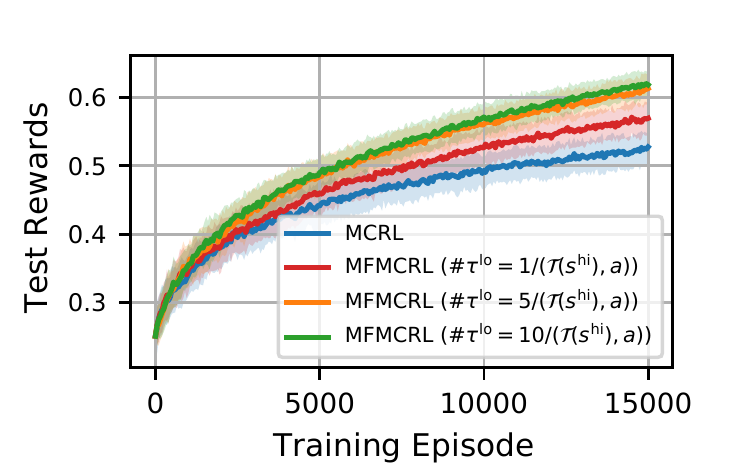}}
\hfill
\centering
\subfloat[$\#\tau^\mathrm{lo} = 10/(\mathcal{T}(s^\mathrm{hi}),a))$]{\includegraphics[width=0.33\columnwidth]{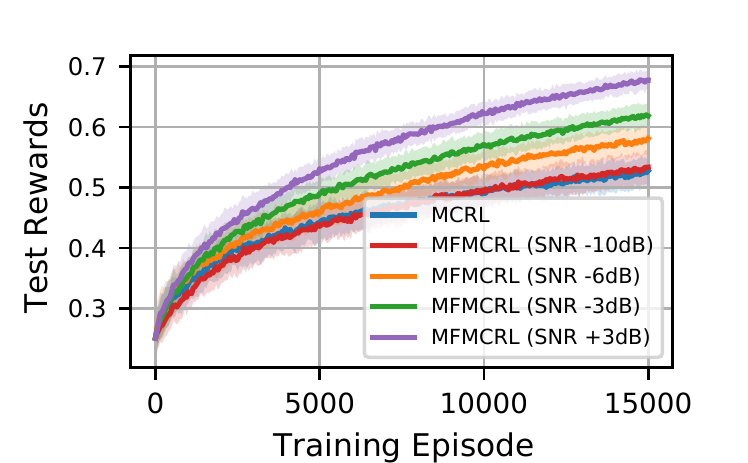}} 
\hfill
\centering
\subfloat[Variance reduction factor]{\includegraphics[width=0.33\columnwidth]{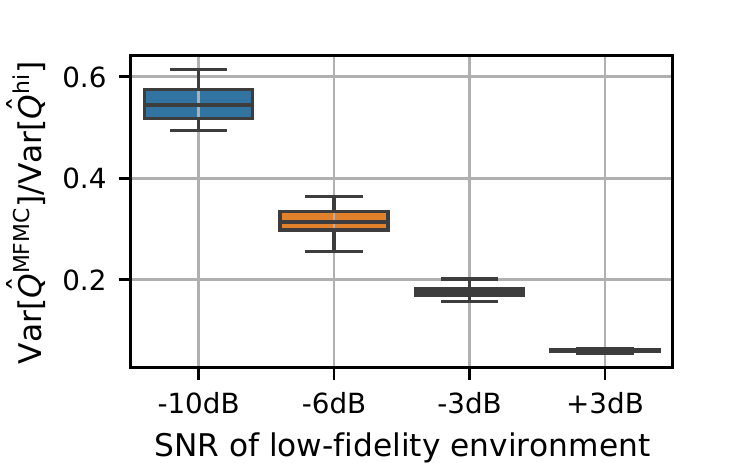}} 
\hfill
\caption{Mean and standard deviation of test episode rewards for the proposed \texttt{MFMCRL} during training: (a) test episode rewards improve with increasing number of low-fidelity samples ($\#\tau^\text{lo}$); (b) test episode rewards improve with less noisy low-fidelity environments; (c) variance reduction factor improves when low- and high-fidelity environments are more correlated. These results are based on a random MDP with $|\mathcal{S}|=1000, |\mathcal{A}|=12$. 
}
\label{fig:Perf2}
\end{figure}

\clearpage

\subsection{NAS}
Neural architectures can be represented as a Directed Acyclic Graph (DAG) $\mathcal{G}$ which describes the operations and order of operations that are used to process the data in a deep learning model. The performance of an architecture $\mathcal{G}$ is  $f(\mathcal{G}, \mathcal{D}, L): (\mathcal{G}, \mathcal{D}, L) \rightarrow \mathbb{R}$, where $\mathcal{D}$ is the training dataset of a given machine learning (ML) task, and $L$ is the number of training epochs. The choice of $f(\mathcal{G}, \mathcal{D}, L)$ depends on the type of ML task and the design objectives, but is usually a metric evaluated on a held-out validation dataset. 

In NAS-Bench-201 \cite{dong2020bench}, the search space is defined over the inner structure of a convolutional cell, which is then stacked to form a classification model that is trained on three datasets, $\mathcal{D} = \{ \texttt{CIFAR-10}, \texttt{CIFAR-100},  \texttt{ImageNet16-120} \}$. The DAG of a cell in NAS-Bench-201 is made of $4$ vertices and therefore has $6$ possible edges. Each edge can assume one of the following choices: $\{$\texttt{zeroize,skip-connect,1x1 conv,3x3 conv,3x3 avg pool}$\}$, where \texttt{zeroize} is the operation of dropping the edge. An architecture $\mathcal{G}$ in NAS-Bench-201 can be therefore represented by a 6-dimensional vector $\mathbf{E}=[e_0,e_1,e_2,e_3,e_4,e_5]$, where each element specifies the edge value from one of the aforementioned 5 operations. Based on this search space, there are $5^6=15,625$ unique architectures. NAS-Bench-201 provides the complete training and validation accuracy curves of each architecture, trained independently over the three aforementioned datasets and for $L=200$ training epochs.

We first discuss the general formulation of NAS as an RL problem, and then discuss the construction of multifidelity environments. In the NAS-RL environment, episodes are started from an architecture based on the initial state distribution. Based on the current architecture (state), the agent chooses a new value (action) for a randomly selected edge to create a new architecture (new state). The new architecture is then evaluated on a held-out validation dataset, and the validation accuracy is provided to the agent as a reward. In NAS-Bench-201, evaluating the validation accuracy of an architecture is a simple table lookup. Below, we provide the detail description of the formulation, 
\begin{enumerate}
    \item \textbf{State space}: the state space is the set of all possible  architectures $\mathbf{E}=[e_0,e_1,e_2,e_3,e_4,e_5]$, in addition to a special state variable  $ \mathbbm{I} \in \{0,1,\cdots,5,6\}$ that determines which edge will be configured/edited by the agent ($ \mathbbm{I} \in \{0,1,\cdots,5\}$), or whether the episode should be terminated ($ \mathbbm{I} = 6$), in which case the episode is restarted from another state based on the initial state distribution. Hence, $\mathcal{S} = [\mathbf{E},\mathbbm{I}]$, and $|\mathcal{S}| = 15,625 \times 7 = 109,375$.
    
    \item \textbf{Action space}: the action space is the set of all possible edge values $\mathcal{A}=\{0,1,2,3,4\}$.
    \item \textbf{Reward}: the reward of a state-action pair is the validation accuracy of the new architecture. The new architecture is identical to the current architecture, except for 
    edge $\mathbb{I} \in \{0,\cdots,5\}$ which is assigned a new operation based on the agent's action $a \in \mathcal{A}$. If $\mathbb{I}=6$, the reward is 0 regardless of the action because it is a terminal state.
    \item \textbf{Transition dynamics}: the successor state of a state-action pair is the same as the current state except for two state variables. First, one of the edges will assume a new value based on the current action, $\mathbf{E}[\mathbbm{I}]=a$. Second, the new $\mathbbm{I}$ is chosen uniformly at random from $ \{0,1,\cdots,5,6\}$.
    \item \textbf{Initial state distribution}: the initial $\mathbf{E}$ is a baseline architecture given by $[\texttt{3x3 conv}, \texttt{3x3 avg pool}, \texttt{3x3 conv}, \texttt{3x3 avg pool}, \texttt{3x3 conv}, \texttt{3x3 avg pool}]$. The initial  $\mathbbm{I}$ is chosen uniformly at random from $ \{0,1,\cdots,5\}$.
\end{enumerate}

Based on the above formulation, we construct two multifidelity scenarios as follows. In both scenarios, the validation accuracy of an architecture at the end of training (i.e., at $L=200$ epochs) is used as a high-fidelity reward in the high-fidelity environment. For the low-fidelity environment, we have two cases: 

\begin{enumerate}
    \item Case (i): low-fidelity environment is identical to the high-fidelity environment except for the reward function, which is now the validation accuracy at the $L=10$th training epoch. 
    \item Case (ii): low-fidelity environment is defined for a smaller search space and the reward function is the validation accuracy of an architecture at the $L=10$th training epoch. In this case, $e_0$ is fixed to the \texttt{zeroize} operation (i.e. the first edge is dropped). Hence,  $|\mathcal{S}^\text{lo}|= 5^5 * 6 = 18,750$, and $\mathcal{S}^\text{lo} \subset \mathcal{S}^\text{hi}$. A high-fidelity state $s^\text{hi}$ can be mapped into a low-fidelity state $s^\text{lo}$ by setting $s^\text{lo} = [1,e_1^\text{hi},e_2^\text{hi},e_3^\text{hi},e_4^\text{hi},e_5^\text{hi}, \mathbbm{I}^\text{hi} - 1]$.
\end{enumerate}

In Figure \ref{fig:nas2}, we provide results which are similar to those on the $\texttt{ImageNet16-120}$ dataset in Figure \ref{fig:nas} of the main text but for the two other datasets in NAS-Bench-201, $\texttt{CIFAR-10}$ and $ \texttt{CIFAR-100}$.

\begin{figure}[h] 
\centering
\subfloat[$\texttt{CIFAR-10}$]{\includegraphics[width=0.5\columnwidth]{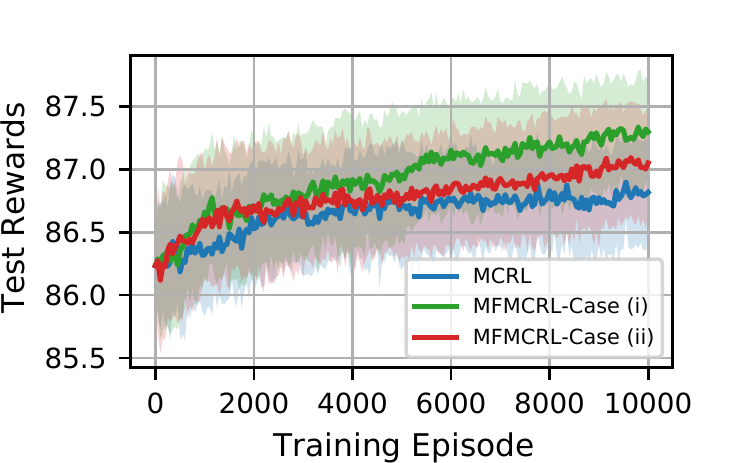}}
\hfill
\centering
\subfloat[$\texttt{CIFAR-100}$]{\includegraphics[width=0.5\columnwidth]{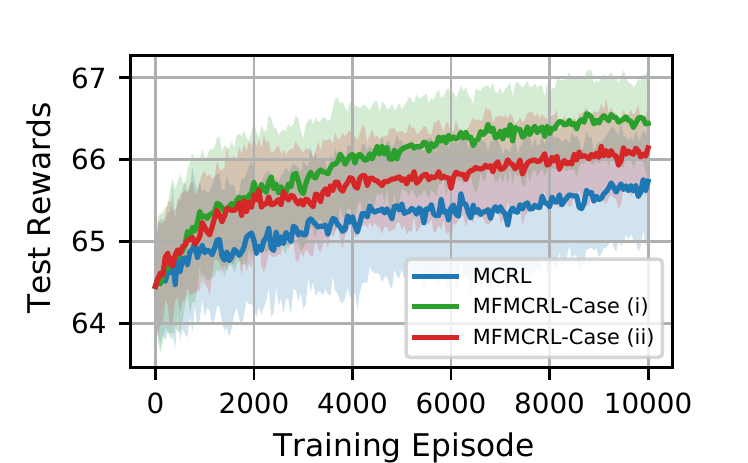}} 
\hfill
\caption{Mean and standard deviation of test episode rewards for the proposed \texttt{MFMCRL} during training on multifidelity NAS environments. The two cases (i) and (ii) are described in the text. In both cases, $\#\tau^\mathrm{lo} = 5/(\mathcal{T}(s^\mathrm{hi}),a))$. }
\label{fig:nas2}
\end{figure}

\end{document}